\documentclass[english,twocolumn,transaction,10pt]{IEEEtran}
\usepackage{CJKutf8}
\usepackage{amsmath,amsfonts}
\usepackage{cite}
\usepackage{amsthm}
\usepackage{amssymb}
\usepackage{algorithmic}
\usepackage{algorithm}
\usepackage{array, makecell}
\newtheorem{assumption}{Assumption} 

\usepackage{float}
\usepackage{setspace} 
\expandafter\def\csname ver@subfig.sty\endcsname{}
\allowdisplaybreaks[4]

\usepackage{svg}

\usepackage{amsfonts}
\usepackage{amssymb}
\usepackage{amsthm}
\usepackage{tabularx}
\usepackage{booktabs}
\usepackage{algorithm}
\usepackage{algorithmic}

\usepackage{stfloats}
\usepackage{subcaption} 
\usepackage{graphicx}
\usepackage{caption}

\usepackage{textcomp}
\usepackage{color}
\usepackage{stfloats}
\usepackage{url}
\usepackage{verbatim}

\usepackage{graphicx}
\usepackage{cite}
\usepackage{booktabs}
\usepackage{hyperref}

\usepackage{bm}
\usepackage{multirow} 
\usepackage{tabularray}

\usepackage{float} 
\usepackage{xcolor}

\usepackage{amsmath}  
\usepackage{amsfonts} 
\usepackage{amssymb}  
\usepackage{amsthm}    
\usepackage{bm}        

\usepackage{graphicx}

\usepackage{caption}
\usepackage{subcaption}

\newtheorem{Theorem}{Theorem}
\newtheorem{lemma}{Lemma}
\newtheorem{remark}{Remark}

\title{Decentralized Federated Learning for Heterogeneous Multi-Task Semantic Communication}
\author{Lin~Yin,~Tiejun~Lv,~\IEEEmembership{Senior~Member, IEEE},~Weicai~Li,~Xi~Yu,~and~Xiaoyu~He

\thanks{Manuscript received 01 July 2025; revised 07 February 2026; accepted 13 August 2026. This paper was supported in part by the National Natural Science Foundation of China under No. 62271068. (\emph{corresponding author: Tiejun Lv}.)}

\thanks{L. Yin, T. Lv, X. Yu, and X. He are with the School of Information and Communication Engineering, Beijing University of Posts and Telecommunications (BUPT), Beijing 100876, China (e-mail: \{yinlin, lvtiejun, yusy, xiaoyuhhh\}@bupt.edu.cn).}
\thanks{W. Li is with the Center for Target Cognition Information Processing Science and Technology, and the Key Laboratory of Modern Measurement and Control Technology, Ministry of Education, both at Beijing Information Science and Technology University, Beijing, China (e-mail: liweicai@bistu.edu.cn).}
}

\begin{document}

\captionsetup[figure]{font={small}, name={Fig.}, labelsep=period}
	\maketitle
 
\begin{abstract}
Collaborative training in distributed semantic communication (DSC) networks typically relies on decentralized federated learning (DFL). However, pushing topology-agnostic aggregation into heterogeneous, multi-task environments creates a fundamental bottleneck: it drives negative transfer and over-consensus bias (OCB). This paper introduces a personalized DSC framework that cuts off this cross-task interference. At the node level, a policy-driven multi-path routing mechanism separates task-specific features from shared representations to preserve local fidelity. Across the network, we deploy a ``communication-while-aggregation'' protocol. It calibrates a column-stochastic consensus matrix using task affinities. This limits the system to absorbing complementary knowledge while actively blocking mismatched parameter updates. To bound the convergence, we derive a unified Lyapunov drift analysis. We reveal a strict U-shaped trade-off: deeper topological mixing reduces variance but amplifies structural OCB. Resolving this tension yields a closed-form expression for the optimal aggregation depth. We evaluate the proposed framework on NYU-v2, where the results reveal a clear trade-off between insufficient aggregation and excessive topological mixing. At the analytically derived optimal aggregation depth, our method achieves a $4.77\%$ global relative improvement over the no-aggregation baseline and outperforms decentralized FedAvg, FedAMP, and heuristic \textit{max} aggregation.
 We further evaluate the framework on Taskonomy and imperfect wireless links to examine the effects of network-size variation and wireless-link reliability.

\end{abstract}

\begin{IEEEkeywords}
Decentralized federated learning, heterogeneous multi-task learning, similarity-aware aggregation, distributed semantic communication, over-consensus bias.
\end{IEEEkeywords}

\section{Introduction}

Modern wireless networks increasingly execute data-intensive inference tasks directly at the edge. To support this shift, semantic communication (SC) bypassing traditional bit-level transmission has emerged as a practical solution to bandwidth bottlenecks. By extracting and transmitting only task-relevant representations, SC drastically cuts communication overhead~\cite{ 10431795}. Yet, most existing SC architectures assume single-task environments~\cite{8723589, 11514062}. Real-world edge applications are rarely this simple; they require concurrent execution of diverse, heterogeneous tasks, necessitating multi-task learning (MTL)~\cite{8954326, 8954221}. Centralized MTL orchestration dictates heavy communication costs, strict latency bounds, and severe privacy risks. Decentralized federated learning (DFL) deployed over peer-to-peer meshes avoids this central bottleneck entirely. It forces collaboration directly to the edge~\cite{11396093}.

\subsection{Related Work and Motivations}

Distributed optimization and task-oriented representations must merge to support edge intelligence. Initial SC research relied heavily on single-modal deep joint source-channel coding (DJSCC)~\cite{8723589, 9953110}. The field now targets multi-task SC (MT-SC) designs. Recent architectures execute multiple downstream tasks through graph attention mechanisms~\cite{10820866}, scalable feature ranking~\cite{10095672}, and shared codebooks~\cite{10431795}. Asynchronous multi-task semantic communication provides another route to extract task-independent knowledge~\cite{10273382}. A critical blind spot remains: almost all these MT-SC systems presuppose a central server. They strip away the realities of decentralized data distributions and the dynamic topological limits inherent to wireless edge networks.

DFL protocols take the opposite approach. They cut out the orchestrator via peer-to-peer gossip averaging. The presence of non-independent and identically distributed (non-IID) data severely hinders the convergence of global consensus, thereby necessitating local adaptation through personalized federated learning (PFL). To mitigate the performance degradation induced by such data heterogeneity, recent literature explores clustered subgraphs~\cite{11271864}, model and gradient decoupling~\cite{11194117}, robust decentralized personalization ~\cite{10924413} and over-the-air computation~\cite{11421483}. Security- and resource-aware decentralized learning has also been studied in edge systems, where asynchronous reinforcement federated learning and task offloading are used to support secure and privacy-preserving operation~\cite{NetworkSecurity}. This line of work mainly concerns security, privacy, and offloading decisions, whereas our setting is shaped by semantic mismatch among heterogeneous task portfolios. In heterogeneous multi-task learning, task compatibility can influence whether shared training is beneficial or leads to interference. Studies on visual task transferability, task grouping, and gradient conflict provide related evidence for using task compatibility to regulate knowledge sharing~\cite{Zamir, standley2020tasks, Yu2020Gradient}. Similarity-guided collaboration has also been used in PFL, where aggregation gives larger influence to related clients rather than enforcing a single averaged model~\cite{Huang2021Personalized}. These studies motivate our use of task-level semantic similarity as a criterion for guiding peer-to-peer aggregation in decentralized multi-task distributed SC (DSC) networks.

Mixing mismatched tasks over a decentralized topology guarantees severe negative transfer, client drift, and over-consensus bias (OCB)~\cite{10982277}. Standard PFL fixes like clustering or meta-learning~\cite{ You2025AFR} break down in SC environments. They demand massive communication payloads. Adaptive aggregation schemes~\cite{xiao2026divergence, 10658480} routinely force clients to broadcast dense, high-dimensional neural parameters. The edge physical layer simply cannot support this bandwidth overhead. 

Recent literature has attempted to narrow this communication bottleneck through optimizations at the physical and network layers. For example, ~\cite{10542235} and ~\cite{10965802} study decentralized federated learning over imperfect channels to mitigate transmission impairments. In the semantic communication domain, recent works investigate robust semantic digital-to-analog conversion for discrete quantization~\cite{10985906} and develop empirical performance models for resource allocation~\cite{11573154}. In addition,~\cite{11263916} incorporates semantic encoding into a hierarchical federated architecture to reduce satellite communication overhead.
Despite these advances, existing frameworks remain insufficient for multi-task edge environments. Many rely on semi-centralized multi-tier architectures~\cite{11263916}, while others treat the underlying neural model as a monolithic object during communication and aggregation~\cite{10981930,10985906}. As a result, they cannot explicitly separate task-specific semantic components from the shared representation space~\cite{chen2024fedbone}.

Current designs largely decouple communication-efficient semantic compression from personalized decentralized training. This limitation suggests that robust multi-task edge intelligence requires personalization to be incorporated directly into the semantic compression process. In particular, neighborhood updates should be selectively screened and reweighted so that mismatched semantic information is suppressed while local task fidelity is preserved.

\subsection{Contributions and Organization}
Solving this structural disconnect requires moving personalization from the federated averaging step into the semantic representation process. To achieve this, we propose a multi-path semantic encoder with a learnable feature routing policy. Rather than relying on a monolithic representation, the proposed encoder balances shared semantic extraction with task-specific feature preservation. At the network level, these learned routing policies support a similarity-aware decentralized aggregation scheme, where each client evaluates semantic alignment with its neighbors based on task-level semantic similarity and adjusts the aggregation weights accordingly. The resulting framework treats the semantic transceiver as the interface for exchanging task-relevant features, while the main optimization is placed on routing-policy learning and similarity-aware neighbor aggregation. As a result, mismatched updates are suppressed, whereas semantically aligned updates are assigned greater influence.

In summary, our primary contributions are threefold:
\begin{itemize}
    \item A distributed, personalized learning framework for heterogeneous semantic communication: instead of relying on monolithic full-model exchange, collaborative modeling is organized around the multi-path encoder that extracts task-relevant semantic representations. This design reduces the communication burden associated with dense parameter sharing while retaining task-specific modeling capacity at local clients.
    
    \item A similarity-aware aggregation protocol designed specifically to counter OCB and negative transfer: we utilize the task-specific policy vectors to calibrate the topological weights during training. Consequently, the network blocks mismatched updates and forces compatible peers into tighter cooperation. 
    
    \item A rigorous convergence proof for the ``communication-while-aggregation'' protocol: using a unified Lyapunov drift analysis, we uncover a fundamental U-shaped tension; i.e., the variance reduction gained from topology mixing is in direct conflict with structural OCB. We resolve this tension by deriving a closed-form solution for the optimal aggregation depth $J^{*}$, establishing the exact point where the steady-state error floor hits its absolute minimum.
\end{itemize}

Empirical evaluations on the NYU-v2 benchmark validate the existence of these theoretical boundaries. If we restrict the network to a shallow mixing regime ($J=1$), the similarity-aware routing still manages a $2.11\%$ global relative improvement. It also acts as a safeguard, preventing the highly sensitive depth estimation task from collapsing entirely. Allowing the system to operate at the analytically derived optimal depth unlocks the full potential of the framework, yielding a $4.77\%$ overall relative gain. Looking at individual tasks, Semantic segmentation jumps by $8.04\%$ and depth estimation improves by $3.31\%$. These margins confirm a critical design hypothesis: explicitly blocking topology-agnostic mixing is mandatory to survive cross-task interference at the edge. We further include Taskonomy experiments and communication-quality simulations to examine the effects of network-size variation and imperfect wireless links.

The remainder of the paper proceeds as follows. Section \uppercase\expandafter{\romannumeral2} sets up the system model. Section \uppercase\expandafter{\romannumeral3} formulates the decentralized training protocol alongside the core optimization objectives. The theoretical convergence analysis, culminating in the derivation of the optimal aggregation depth, is presented in Section \uppercase\expandafter{\romannumeral4}. Section \uppercase\expandafter{\romannumeral5} discusses the experimental findings, and Section \uppercase\expandafter{\romannumeral6} draws the final conclusions.

\begin{table}[htbp]
\caption{Summary of Key Notations}
\label{tab:notations}
\centering
\small
\setlength{\tabcolsep}{4pt}
\renewcommand{\arraystretch}{1.08}
\begin{tabularx}{\columnwidth}{@{}>{\raggedright\arraybackslash}p{0.22\columnwidth}>{\raggedright\arraybackslash}X@{}}
\toprule
\textbf{Notation} & \textbf{Definition} \\
\midrule
$i,\mathcal{N}$ & Client index, and set of clients in the network. \\
$t,\mathcal{T}$ & Task index, and set of global semantic tasks. \\
$\mathcal{T}_{i}$ & Personalized task set of client $i$. \\
$n,\mathcal{N}_{i}$ & One-hop neighbor index, and set of neighbors of client $i$. \\
$\mathbf{u}_{t}$ & Learnable binary policy vector for task $t$. \\
$s_{t,t'}$ & Semantic similarity between task $t$ and task $t'$. \\
$S_{i,n}$ & Overall semantic alignment between client $i$ and its neighbor $n$. \\
$\mathcal{L}_{i,t}$ & Task-level local loss of client $i$ on task $t$. \\
$\mathcal{L}_{i}$ & Client-level local loss of client $i$. \\
$\mathcal{L}$ & Global network-wide loss. \\
$r,R$ & Communication round index and total number of communication rounds. \\
$e,E$ &  Training epoch index and total epochs in one round. \\
$j,J$ & Aggregation depth index and total number of aggregation steps. \\
$\psi_{i,r,j}$ & Model parameters of client $i$ after the $j$-th aggregation step in round $r$. \\
$w_{i,n},\,\mathbf{W}$ & Aggregation weight between client $i$ and neighbor $n$, and global aggregation weight matrix. \\
$E_{\mathrm{cons},r,J}$ & Round-wise consensus error after $J$ aggregation steps in communication round $r$. \\
$\Phi_{r,J}$ & Composite Lyapunov potential function. \\
$\Omega_*^2$ & Over-consensus bias. \\
$J^*$ & Optimal aggregation depth. \\
\bottomrule
\end{tabularx}
\end{table}

\section{System Model}
We detail the architecture underpinning our DSC framework. The network functions as a decentralized mesh. Clients bypass central coordination entirely, relying instead on direct neighborhood exchanges to jointly train their local multi-path encoders and task-specific decoders.

\begin{figure}[htbp]
    \centering
    \includegraphics[width=1\linewidth, trim={0cm 0cm 0cm 0cm}, clip]{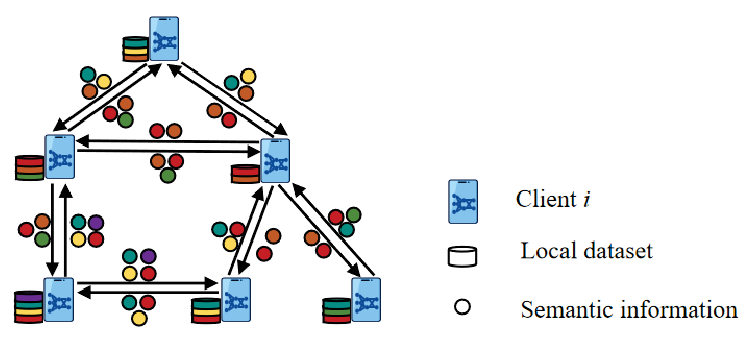}
    \vspace{-0.5cm} 
    \caption{The proposed distributed semantic communication framework. The distinct colors indicate different tasks and semantic information.}
    \label{fig:communication_flow}
\end{figure}

\subsection{Distributed Semantic Communication Model}\label{section DSC model}
The DSC network comprises a set of clients $\mathcal{N}=\{1,\cdots,N\}$ forming a mesh topology, and a global semantic task set $\mathcal{T}=\{1,\cdots,T\}$. Each client $i \in \mathcal{N}$ is assigned a personalized semantic task portfolio $\mathcal{T}_{i} \subseteq \mathcal{T}$ and possesses an independent local dataset $\mathcal{D}_{i}=\cup_{t \in \mathcal{T}_i} \mathcal{D}_{i,t}$, where $\mathcal{D}_{i,t}=\{(x,y_{t})\}$ contains the raw input $x$ and the ground truth labels $y_{t}$ specific to task $t$. Client $i$ communicates exclusively with its one-hop neighbor set $\mathcal{N}_{i}$. The one-hop neighbor set $\mathcal{N}_i$ is specified by a distance-constrained communication graph. Each edge represents a feasible peer-to-peer link between two clients. Information from non-neighboring clients is not obtained through direct long-range broadcasting, but through repeated one-hop aggregation over the connected graph. This setting keeps the decentralized training process local while allowing model information to propagate across the network over multiple aggregation steps.

Rather than reconstructing raw bit streams, the localized SC model extracts and transmits only task-relevant features. This design explicitly bypasses conventional bandwidth bottlenecks. As Fig.~\ref{fig:communication_flow} illustrates, the network assigns a heterogeneous multi-task portfolio to each client operating on a strictly local dataset. Clients completely avoid transmitting raw data or dense model weights. They locally encode task-specific semantic representations and exchange them directly across peer-to-peer mesh links, eliminating the need for central coordination.

Consider a raw input image $x_i \in \mathbb{R}^{C_{\mathrm{in}} \times W_{\mathrm{in}} \times H_{\mathrm{in}}}$ at client $i$. The joint encoder $f_{t}^{\mathrm{enc}}(\cdot)$ parameterized by $\theta_{i,t}^{E}$ generates the task-relevant latent representation $X_{i,t}$ for task $t$ as
\begin{equation}
    X_{i,t}=f_{t}^{\mathrm{enc}}(x_i; \theta_{i,t}^{E}), 
    \quad 
    X_{i,t} \in \mathbb{R}^{C_{\mathrm{out}} \times W_{\mathrm{out}} \times H_{\mathrm{out}}}.
\end{equation}
Here, $C_{\mathrm{in}}$, $W_{\mathrm{in}}$, and $H_{\mathrm{in}}$ denote the channel number, width, and height of the input image, respectively, while $C_{\mathrm{out}}$, $W_{\mathrm{out}}$, and $H_{\mathrm{out}}$ denote those of the latent semantic feature.

The latent tensor $X_{i,t}$ is mapped into a sequence of complex channel symbols $\mathbf{x}_{i,t} \in \mathbb{C}^{S}$, where $S$ denotes the available symbol budget of the edge link. The symbols are transmitted from client $i$ to its one-hop neighbor $n$ over the decentralized communication graph. During neighborhood exchange, transmissions follow a time division multiple access (TDMA)-based link schedule. To construct the schedule, we build a conflict graph over the feasible one-hop links: two links are treated as conflicting if they share a transmitter or receiver, or if one transmission may interfere with the receiver of the other link under the adopted interference range. A graph-coloring rule then assigns conflicting links to different time slots, following coloring-based link scheduling for TDMA transmissions in wireless networks~\cite{Gandham2008LinkScheduling}. Under this scheduled one-hop exchange, an active link is modeled with the desired signal, channel fading, and additive white Gaussian noise (AWGN), while conflicting links are deactivated in the same time slot.

For a scheduled transmission over link $(i,n)$, the received signal vector at client $n$ is modeled as
\begin{equation} 
    \mathbf{y}_{n,i,t} = \mathbf{H}_{n,i}\mathbf{x}_{i,t} + \mathbf{z}_{n,i,t}, 
\end{equation}
where $\mathbf{H}_{n,i} \in \mathbb{C}^{S \times S}$ represents the equivalent channel matrix of link $(i,n)$, and $\mathbf{z}_{n,i,t} \sim \mathcal{CN}(\mathbf{0}, \sigma_{c}^{2}\mathbf{I})$ is the AWGN vector.

Assuming that client $n$ has channel state information for the scheduled link, a zero-forcing equalizer estimates the transmitted symbols as
\begin{equation}
    \hat{\mathbf{x}}_{n,i,t}
    =
    (\mathbf{H}_{n,i}^{\mathrm{H}}\mathbf{H}_{n,i})^{-1}
    \mathbf{H}_{n,i}^{\mathrm{H}}\mathbf{y}_{n,i,t}
    =
    \mathbf{x}_{i,t}+\tilde{\mathbf{z}}_{n,i,t},
\end{equation}
where $\mathbf{H}_{n,i}^{\mathrm{H}}$ denotes the Hermitian transpose, and $\tilde{\mathbf{z}}_{n,i,t}$ is the amplified noise after equalization.

The encoder and decoder for task $t$ are trained as a paired semantic transceiver at client $i$. After training, the corresponding decoder parameters $\theta_{i,t}^{D}$ are made available to the intended receiving neighbor for semantic reconstruction. Client $n$ reshapes $\hat{\mathbf{x}}_{n,i,t}$ into the feature tensor $\hat{X}_{n,i,t}$ and reconstructs the task prediction as
\begin{equation}
    \hat{y}_{n,i,t}=f_{t}^{\mathrm{dec}}(\hat{X}_{n,i,t}; \theta_{i,t}^{D}),
\end{equation}
where $\theta_{i,t}^{D}$ denotes the task-specific decoder trained at client $i$ and used by client $n$ to decode the received semantic feature.

The scheduled one-hop link is further associated with packet-level reliability. For link $(i,n)$, we use the packet loss probability $p_{n,i}^{\mathrm{loss}}$ to characterize the reliability of packet delivery. This probability reflects the link-level signal-to-noise ratio (SNR) condition and is determined by the nominal SNR setting, link distance, and packet length~\cite{10542235}. A packet-level outage or decoding failure is counted as an unsuccessful packet delivery.

Let $K_{n,i}^{\mathrm{pkt}}$ denote the number of packets transmitted over link $(i,n)$, which is determined by the message size and the packet length. When a packet is lost, a retransmission is triggered until successful delivery. Under this latency model, packet loss does not alter the decoded semantic feature after successful reception, but it increases the communication time. If $\ell_{\mathrm{pkt}}$ denotes the duration of one packet transmission, the expected link transmission latency is
\begin{equation}
    \ell_{n,i}^{\mathrm{link}}
    =
    \frac{K_{n,i}^{\mathrm{pkt}}\ell_{\mathrm{pkt}}}
    {1-p_{n,i}^{\mathrm{loss}}}.
\end{equation}
This packet-level abstraction allows link reliability to affect decentralized aggregation through retransmission latency, without introducing a detailed physical-layer modulation, coding, or outage analysis.

\subsection{Design of Multi-Task Joint Source-Channel Coding}
\subsubsection{Policy-Driven Multi-Path Encoder}
To enable heterogeneous multi-task extraction without instantiating independent backbones, our joint encoder employs a policy-driven feature routing architecture~\cite{sun2020adasharelearningshareefficient}. The shared backbone comprises $K$ sequential computational blocks. We define a learnable binary policy vector $\mathbf{u}_{t} = [u_{t,1}, \dots, u_{t,K}]^{\top} \in \{0,1\}^K$ for each task $t$, which dictates the execution path. 

For the $t$-th task of client $i$, the latent representation evolves recursively through the $K$ blocks according to
\begin{equation}
    \boldsymbol{\alpha}_{i,t,k} = u_{t,k} \cdot \mathcal{F}_{i,k}(\boldsymbol{\alpha}_{i,t,k-1}) + \boldsymbol{\alpha}_{i,t,k-1},
\end{equation}
for all $k \in \{1,\cdots,K\}$, where $\boldsymbol{\alpha}_{i,t,0} = x$ is the initial input, and $\mathcal{F}_{i,k}(\cdot)$ represents the transfer function of the $k$-th shared block parameterized by the local client. The indicator $u_{t,k}$ dynamically controls whether the $k$-th block is activated or bypassed. The final output $\boldsymbol{\alpha}_{i,t,K}$ corresponds to the extracted semantic feature $X_t$.

This mechanism ensures that the active encoder parameters $\theta_{i,t}^{E}$ are a functionally restricted subset of the global parameter space $\mathcal{F}_{i}=\{\mathcal{F}_{i,k}\}_{k=1}^K$, governed by the task-specific policy $\mathbf{u}_t$. By disentangling the execution paths, the network prevents gradient interference and structural OCB among disjoint tasks. As illustrated in Fig.~\ref{fig:Encoder}, a policy-driven multi-path encoder adaptively extracts task-specific semantic representations by routing the input through shared computational blocks. Subsequently, independent task-specific decoders reconstruct the final predictions for their corresponding tasks. 

\begin{figure}[htbp]
    \centering
    \includegraphics[width=\linewidth]{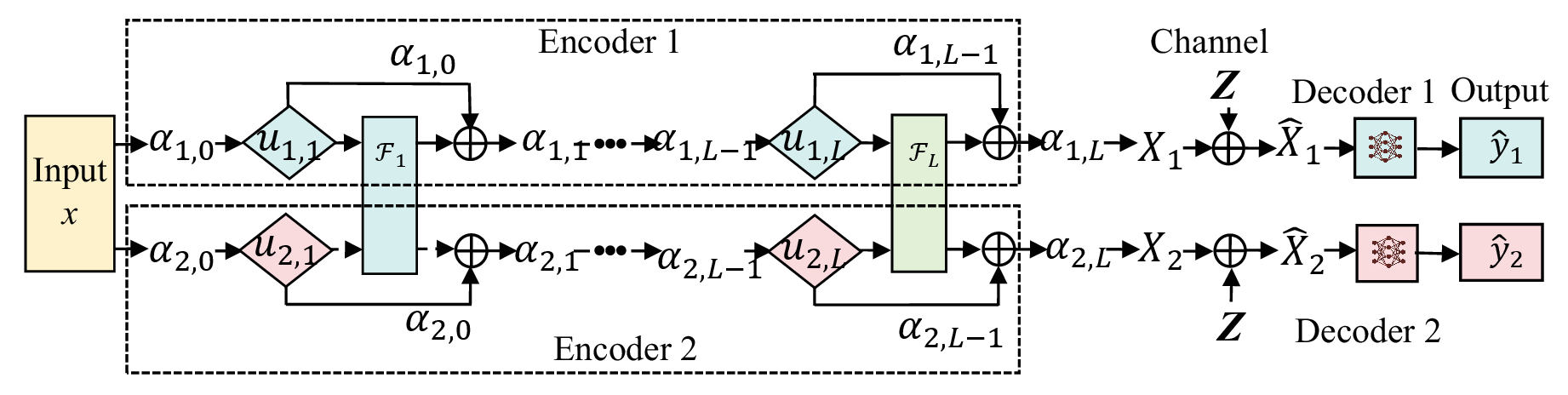}
    \caption{Schematic of the policy-driven multi-path semantic encoding and task-specific decoding architecture for heterogeneous multi-task execution.}
    \label{fig:Encoder}
\end{figure}

The routing policy also provides a compact descriptor of task-specific use of the shared encoder. Since $u_{t,k}$ records whether task $t$ activates the $k$-th shared block, two tasks with more overlapping active blocks tend to update more overlapping parts of the shared backbone during local training. This overlap serves as a lightweight indicator of compatibility in the shared representation space. Consequently, we quantify inter-task affinity by defining the semantic similarity between any two tasks $t \in \mathcal{T}_{i}$ and $t' \in \mathcal{T}_{n}$ as the cosine similarity of their policy vectors expressed as
\begin{equation}
    s_{t,t'}=\frac{\mathbf{u}_{t}^{\top} \mathbf{u}_{t'}}{\|\mathbf{u}_{t}\| \|\mathbf{u}_{t'}\|}.
\end{equation}

Under this binary policy representation, the cosine score can be understood as a normalized measure of active-path overlap. Its numerator counts the shared activated blocks between two tasks, while the normalization avoids favoring tasks merely because they activate more blocks. Thus, the metric provides a lightweight indicator of task compatibility within the proposed multi-path encoder, and its interpretation is based on how tasks select and share encoder blocks.

And the overall semantic alignment between client $i$ and its neighbor $n$ is evaluated using symmetric best matching as
\begin{equation}\label{eq:client_similarity}
    S_{i,n} = \frac{1}{2|\mathcal{T}_i|} \sum_{t \in \mathcal{T}_i} \max_{t' \in \mathcal{T}_n} s_{t,t'} + \frac{1}{2|\mathcal{T}_n|} \sum_{t' \in \mathcal{T}_n} \max_{t \in \mathcal{T}_i} s_{t,t'},
\end{equation} 
where $S_{i,n}$ guides decentralized aggregation weights, prioritizing clients with overlapping pathways while filtering out those with disjoint task portfolios.

\subsubsection{Task-Specific Decoders}
While the encoder employs a shared routing backbone, each task $t$ maintains an independent, lightweight DeepLab-style dilated convolutional decoder. The decoder exploits multi-scale contextual features via parallel branches, which are linearly fused along the channel dimension. Because the parameters $\theta_{i,t}^{D}$ are strictly task-specific, they are decoupled from the neighbor aggregation process, ensuring high local task fidelity.

\subsection{Personalized and Global Optimization Objectives}
For client $i$, the empirical loss on task $t$ is evaluated over its corresponding local dataset $\mathcal{D}_{i,t}$, as given by
\begin{equation}
    \mathcal{L}_{i,t}(\lambda_{i,t},\mathcal{D}_{i,t},\theta_{i,t})=\frac{1}{|\mathcal{D}_{i,t}|}\sum_{(x,y_{t})\in \mathcal{D}_{i,t}} \lambda_{i,t} L_{t}(\hat{y}_{i,t},y_{t}),
\end{equation}
where $\lambda_{i,t}$ denotes the task-specific weighting coefficient that governs the relative importance of task $t$, $L_{t}(\cdot, \cdot)$ is the task-specific criterion (e.g., cross-entropy for segmentation), and $\theta_{i,t}$ encompasses both the localized encoder pathway and the task-specific decoder parameters. While all parameters in $\theta_{i,t}$ are jointly updated during local gradient descent, only the shared encoder routing backbone participates in the decentralized network-wide consensus protocol.

Let $\theta_{i} = \mathcal{F}_{i}$ denote the entirety of the aggregatable encoder backbone on client $i$. By treating the localized decoders as conditionally optimized internal variables, the aggregate local objective for client $i$ is formulated with respect to the shared backbone as
\begin{equation}\label{eq:local_loss}
    \mathcal{L}_{i}(\Lambda_{i},\mathcal{D}_{i},\theta_{i})=\sum_{t\in \mathcal{T}_{i}} \mathcal{L}_{i,t}(\lambda_{i,t},\mathcal{D}_{i,t},\theta_{i,t}),
\end{equation}
where $\Lambda_{i}=\{\lambda_{i,t}|t \in \mathcal{T}_{i}\}$ collects the predefined importance weights for the local task portfolio.

Building upon the local objectives, the generalized global loss function over the entire mesh network is defined as the uniform average of all client-specific losses given by
\begin{equation}
    \mathcal{L}(\Lambda,\mathcal{D},\boldsymbol{\theta})=\frac{1}{N}\sum_{i\in \mathcal{N}}\mathcal{L}_{i}(\Lambda_{i},\mathcal{D}_{i},\theta_{i}),
\end{equation}
where $\Lambda=\{\Lambda_{i}|i\in \mathcal{N}\}$, $\mathcal{D}=\{\mathcal{D}_{i}|i\in \mathcal{N}\}$, and $\boldsymbol{\theta}=\{\theta_{i}|i\in \mathcal{N}\}$ aggregates the shared parameters across all clients.

The DFL system optimizes two interdependent objectives. First, the localized personalization objective seeks the optimal parameters $\theta_{i}^*$ that minimize client $i$'s empirical risk formulated as
\begin{equation}\label{eq:personal_obj}
    \theta_{i}^*=\underset{\theta_{i}}{\arg \min} \,\mathcal{L}_{i}\left(\Lambda_{i},\mathcal{D}_{i},\theta_{i}\right), \quad \forall i \in \mathcal{N}.
\end{equation}
Secondly, the network-level consensus objective seeks the global minimizer $\boldsymbol{\theta}^*$ formulated as
\begin{equation}\label{eq:global_obj}
    \boldsymbol{\theta}^*=\underset{\boldsymbol{\theta}}{\arg \min} \, \mathcal{L}(\Lambda,\mathcal{D},\boldsymbol{\theta}).
\end{equation}
The topological tension between the local minimizers $\theta_{i}^*$ and the global minimizer $\boldsymbol{\theta}^*$ is regulated by the within-round aggregation depth $J$, a phenomenon we analytically characterize in the subsequent convergence analysis.
\section{Decentralized Training for the DSC Model}
Every client within the fully decentralized DSC network independently tackles a personalized empirical risk minimization problem. Consequently, the sharing of knowledge across the network is dictated exclusively by peer-to-peer communications. Fig.~\ref{fig:system model} breaks down our DFL training paradigm. We enforce an alternating sequence for every communication round. Local data drives the initial gradient-based semantic updates. Immediately after, the client pulls in peer parameters, fusing them through dynamic, similarity-aware weighting.

\begin{figure}[htbp]
    \centering
    \includegraphics[width=\linewidth]{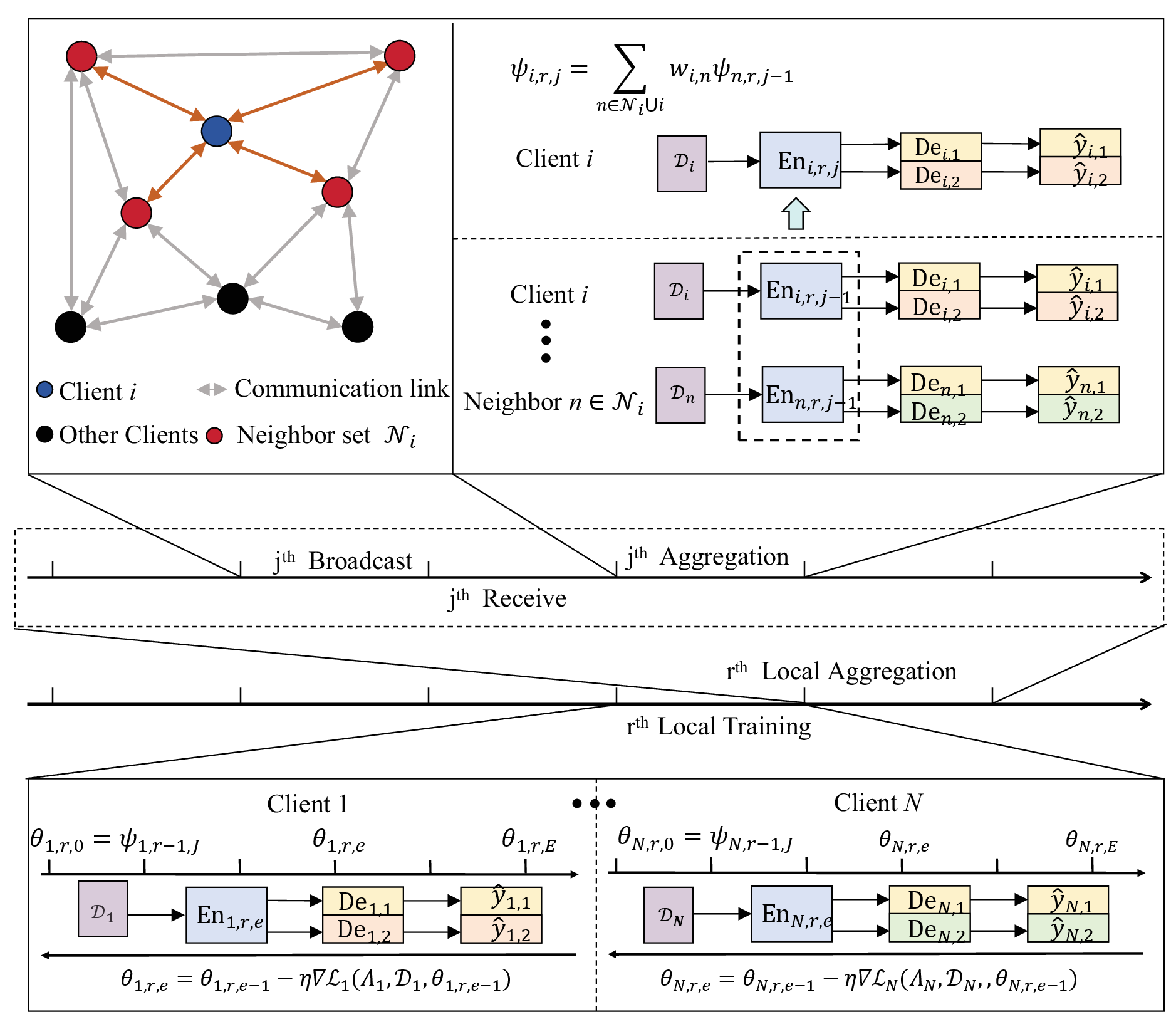}
    \caption{Workflow of the proposed ``communication-while-aggregation'' protocol for the distributed personalized SC system.}
    \label{fig:system model}
\end{figure}

\subsection{Local Model Training}
In the DSC network, clients perform parallel training by iteratively updating aggregatable model parameters $\theta_{i}$ via stochastic gradient descent (SGD). 

Each client $i$ performs $R$ communication rounds during the training process. In the $r$-th communication round, each client $i$ executes $E$ local epochs based on its local dataset $\mathcal{D}_{i}$. The local training starts with its locally aggregated model from the $(r-1)$-th communication round. During local training, the model at the $e$-th epoch, denoted by $\theta_{i,r,e}$, is updated via
\begin{equation}\label{eq:local_model_update}
    \theta_{i,r,e}=\theta_{i,r,e-1}-\eta \nabla \mathcal{L}_{i}(\Lambda_{i},\mathcal{D}_{i},\theta_{i,r,e-1}),
\end{equation}
where $\eta$ is the learning rate, and $\nabla(\cdot)$ denotes the stochastic gradient. Consequently, the output parameter of the local training phase in round $r$ after $E$ epochs is given by
\begin{equation}\label{eq:local_output}
    \theta_{i,r,E} = \theta_{i,r,0} - \eta {\sum}_{e=0}^{E-1} \nabla \mathcal{L}_{i}(\Lambda_{i},\mathcal{D}_{i},\theta_{i,r,e}).
\end{equation}

To quantify the structural divergence inherent in heterogeneous decentralized training, we introduce an idealized reference trajectory. Assuming client $i$ evaluates gradients at the instantaneous network average $\bar{\theta}_{r, e} = \frac{1}{N}\sum_{i \in \mathcal{N}} \theta_{i,r,e}$ rather than its local state, the resulting synchronized reference state $\hat{\theta}_{i,r, E}$ is formulated as
\begin{equation}\label{eq:ideal_local_training}
    \hat{\theta}_{i,r, E}=\theta_{i,r,0}-\eta {\sum}_{e=0}^{E-1}\nabla \mathcal{L}_{i}(\Lambda_{i},\mathcal{D}_{i},\bar{\theta}_{r,e}).
\end{equation}
This reference model establishes a rigorous mathematical anchor to bound the consensus error and characterize the aggregation trade-offs in the subsequent theoretical analysis.

\subsection{Local Model Aggregation}
We adopt a ``communication-while-aggregation'' protocol as our decentralized aggregation scheme. This peer-to-peer gossip protocol is scalable, fault-tolerant, and inherently bypasses the central bottleneck. During the aggregation phase of each communication round, each client transmits its task-specific policy vectors together with the current shared encoder parameters to its one-hop neighbors. The received shared encoder parameters are then integrated over $J$ consecutive peer-to-peer aggregation steps according to
\begin{equation}\label{eq:aggregation}
    {\psi}_{i,r,j}={\sum}_{n \in \mathcal{N}_{i} \cup \{i\}}w_{i,n}\psi_{n,r,j-1},
\end{equation}
where ${\psi}_{i,r,0} = \theta_{i,r,E}$, and $w_{i,n}$ is the aggregation weight denoting the fraction of model information transmitted from neighbor $n$ to client $i$.  

To prioritize semantically aligned updates while preserving the exact global parameter average during communication, the raw similarities $S_{i,n}$ defined in Eq.~\eqref{eq:client_similarity} are converted into normalized aggregation coefficients. Specifically, we apply a column-wise softmax operation over the outgoing edges of each sender $n$, expressed as
\begin{equation}\label{eq:softmax}
    w_{i,n}=\frac{\exp(S_{i,n}/\tau)}{\sum_{m\in\mathcal{N}_{n}\cup\{n\}}\exp(S_{m,n}/\tau)},
\end{equation}
where $\tau>0$ is a temperature parameter, and the self-similarity $S_{n,n}$ is defined as the maximum possible affinity metric. This guarantees that $\sum_{i\in\mathcal{N}_{n}\cup\{n\}} w_{i,n} = 1$ for any client $n$. 
In this protocol, cross-client knowledge enters a local model through neighborhood aggregation. Following the usual interpretation of negative transfer~\cite{Wang_2019_CVPR}, aggregation-induced negative transfer refers to the case where received neighbor updates are poorly aligned with a client's local task requirements and degrade its local task performance. This degradation is induced by peer-to-peer model mixing under heterogeneous task portfolios, rather than by multi-task learning itself. The similarity-aware weights in Eq.~\eqref{eq:softmax} are designed to mitigate the influence of such incompatible updates during aggregation.

This successive aggregation model allows each client to implicitly incorporate multi-hop information without establishing global connections. Let $\boldsymbol{\Psi}_{r,j} =[\psi_{1,r,j},\cdots,\psi_{N,r,j}]^{\top} \in \mathbb{R}^{N\times M}$ denote the collective parameter matrix. We define the global aggregation weight matrix as $\boldsymbol{\mathrm{W}}=\{w_{i,n}\}_{i,n=1}^{N}\in\mathbb{R}^{N\times N}$, where $w_{i,n}=0$ for any $i\notin \mathcal{N}_{n}\cup\{n\}$. By construction via Eq.~\eqref{eq:softmax}, $\boldsymbol{\mathrm{W}}$ is a strictly column-stochastic matrix, satisfying $\mathbf{1}_{N}^{\top}\boldsymbol{\mathrm{W}}=\mathbf{1}_{N}^{\top}$ with $\mathbf{1}_{N}=[1,\dots,1]^{\top}$. This column-stochastic property ensures that the global arithmetic mean of the network parameters is perfectly preserved across any aggregation step $j$, i.e., $\frac{1}{N}\sum_{i=1}^N \psi_{i,r,j} = \frac{1}{N}\sum_{i=1}^N \psi_{i,r,j-1}$.

Assuming a fixed and strongly connected topology, the iterative aggregation obeys
\begin{equation}
    \boldsymbol{\Psi}_{r,j}=\boldsymbol{\mathrm{W}}\boldsymbol{\Psi}_{r,j-1}=(\boldsymbol{\mathrm{W}})^{j}\boldsymbol{\Psi}_{r,0}.
\end{equation}

By the Perron-Frobenius theorem, the irreducible column-stochastic matrix $\boldsymbol{\mathrm{W}}$ possesses a unique positive right stationary eigenvector $\boldsymbol{v}\in\mathbb{R}^{N}$ satisfying $\boldsymbol{\mathrm{W}}\boldsymbol{v}=\boldsymbol{v}$, where $v_{i}>0$ and $\mathbf{1}_{N}^{\top}\boldsymbol{v}=1$. Let $\rho=\max\{|\rho_{\mathrm{W}}|:\rho_{\mathrm{W}}\in\operatorname{spec}(\boldsymbol{\mathrm{W}}),\rho_{\mathrm{W}}\neq 1\}<1$ denote the subdominant spectral radius governing the mixing rate. The powers of $\boldsymbol{\mathrm{W}}$ geometrically converge to a rank-one consensus projector expressed as
\begin{equation}\label{eq:j_inf}
    \lim_{j\to \infty, j\in \mathbb{N}_{+}}(\boldsymbol{\mathrm{W}})^{j}=\boldsymbol{v}\mathbf{1}_{N}^{\top}.
\end{equation}

Under the ``communication-while-aggregation'' protocol, executing $J$ aggregations traversing multi-hop paths exponentially suppresses the topology-dependent variance. However, in heterogeneous multi-task settings, an unbounded $J$ aggressively pulls personalized models toward a weighted global state $\boldsymbol{v}\mathbf{1}_N^{\top}\boldsymbol{\Psi}_{r,0}$, amplifying the structural OCB. This structural tension necessitates the existence of an optimal, finite $J^*$.

\begin{algorithm}[htbp] 
    \caption{Communication-while-Aggregation for Personalized DFL}   
    \label{algorithm:dfl_sc}       
    \begin{algorithmic}[1] 
    \REQUIRE  $\mathcal{N}$, $\mathcal{D}$,  $\boldsymbol{\theta}_{0}$,  $\mathcal{T}, \Lambda, R, E, J, \eta$.
    \ENSURE Trained personalized local models $\boldsymbol{\theta}_{R}$.
    \STATE Clients initialize shared backbone parameters $\boldsymbol{\theta}_{0}$.
    \FOR{each communication round $r = 1, \cdots, R$}
        \FOR{each client $i \in \mathcal{N}$ in parallel}
            \STATE Perform parallel local training to obtain $\theta_{i,r,E}$ by executing $E$ epochs of SGD on $\mathcal{D}_i$ via Eq.~\eqref{eq:local_model_update}.
            \STATE Initialize the local aggregation buffer with state, $\psi_{i,r,0} \leftarrow \theta_{i,r,E}$.
            \FOR{each aggregation step $j = 1, \cdots, J$}
                \STATE Broadcast current local state $\psi_{i,r,j-1}$ and synchronize with all semantically aligned one-hop neighbors $n \in \mathcal{N}_i$.
                \STATE Compute the updated mixing state $\psi_{i,r,j}$ by performing similarity-aware neighborhood aggregation via Eq.~\eqref{eq:aggregation}.
            \ENDFOR
            \STATE Set the local model $\theta_{i,r+1,0} \leftarrow \psi_{i,r,J}$ for the subsequent training round.
        \ENDFOR
    \ENDFOR
    \end{algorithmic} 
\end{algorithm}
\section{Theoretical Formulation and Convergence Analysis}
We study a DSC network under the ``communication-while-aggregation'' protocol. To simultaneously reflect global learning and client-level personalization, we introduce a round-wise composite objective that couples the loss at the round mean iterate with a consensus-error term measuring post-aggregation disagreement. Based on this objective, we establish convergence guarantees in the DSC setting and further characterize how the aggregation depth $J$ affects the steady-state performance.

\subsection{Composite Objective for Global Generalization and Personalization}
After establishing the client-wise personalized objective in Eq.~\eqref{eq:personal_obj}, evaluating whether decentralized training reaches a satisfactory operating point requires jointly accounting for global generalization and client-level personalization. In distributed SC with heterogeneous task portfolios, the minimizer of the global objective in Eq.~\eqref{eq:global_obj} differs from the collection of client-specific minimizers. As a result, optimizing the global loss alone may still leave many clients far from their preferred personalized solutions.

To capture this global-personalization discrepancy within each communication round, we consider the model parameters after $J$ aggregations in round $r$. Let $\psi_{i,r,J}$ denote the model parameter of client $i$, and let $\bar{\psi}_{r,J}=\tfrac{1}{N}\sum_{i\in\mathcal{N}}\psi_{i,r,J}$ be their network average. Because the aggregation weight matrix $\boldsymbol{\mathrm{W}}$ is strictly column-stochastic, the intra-round aggregation rigorously preserves the global arithmetic mean according to
\begin{equation}\label{eq:mean-preserve}
    \bar{\psi}_{r,J}=\bar{\theta}_{r,E},
\end{equation}
which implies that the network average after $J$ topological mixings exactly coincides with the global mean state immediately after the local training phase.

Building on this structural property, the exact global loss parameterized by the dispersed local models can be algebraically decomposed into the loss at the global mean and a personalization-induced variation term, formulated as
\begin{align}\label{eq:global-personal-obj-def}
    \mathcal{L}(\Lambda, \mathcal{D}, \Psi_{r,J})
    &= \mathcal{L}(\Lambda, \mathcal{D}, \bar{\psi}_{r,J})
    \nonumber \\  &\hspace{-5em}+ \frac{1}{N} \sum_{i \in \mathcal{N}}
    \Big[
    \mathcal{L}_{i}(\Lambda_{i}, \mathcal{D}_{i}, \psi_{i,r,J})
    - \mathcal{L}_{i}(\Lambda_{i}, \mathcal{D}_{i}, \bar{\psi}_{r,J})
    \Big],
\end{align}
where $\Psi_{r,J}=\{\psi_{i,r,J}\mid i\in\mathcal{N}\}$. The first term reflects the generalized optimization progress, while the second term captures the exact loss deviation induced by task heterogeneity.

To analytically upper-bound this exact deviation, we invoke the $\mathrm{L_{eff}}$-smoothness property (formally defined in Assumption 1). For any client $i$, the smoothness inequality yields
\begin{align}\label{eq:local-loss-dev-bound}
    \mathcal{L}_{i}\!\left(\Lambda_{i},\mathcal{D}_{i},\psi_{i,r,J}\right)    &\le\mathcal{L}_{i}\!\left(\Lambda_{i},\mathcal{D}_{i},\bar{\psi}_{r,J}\right)+
    \frac{\mathrm{L_{eff}}}{2}\left\|\psi_{i,r,J}-\bar{\psi}_{r,J}\right\|^{2}\nonumber \\
    &\hspace{-1em} +\left\langle\nabla \mathcal{L}_{i}\!\left(\Lambda_{i},\mathcal{D}_{i},\bar{\psi}_{r,J}\right),\psi_{i,r,J}-\bar{\psi}_{r,J}\right\rangle.
\end{align}
When averaging Eq.~\eqref{eq:local-loss-dev-bound} over all clients, the residual inner product term does not trivially vanish due to the non-IID data distributions. However, by applying the Cauchy-Schwarz and Young's inequalities, this inner product can be strictly decoupled and bounded by a combination of the squared gradient norms and the squared parameter dispersion. 

This mathematical decoupling reveals that the exact loss deviation in Eq.~\eqref{eq:global-personal-obj-def} is dominated by the network consensus variance. We define this round-wise consensus error as
\begin{equation}\label{eq:app_Econs_def}
    E_{\mathrm{cons},r,J} \triangleq \frac{1}{N}\sum_{i\in\mathcal{N}} \left\|\psi_{i,r,J}-\bar{\psi}_{r,J}\right\|^{2},
\end{equation}
where $E_{\mathrm{cons},r,J}$ serves as a tractable metric for geometric disagreement across clients. 

Because the exact physical loss $\mathcal{L}(\Lambda, \mathcal{D}, \Psi_{r,J})$ is difficult to optimize directly in a decentralized topology, the aforementioned bounding technique allows us to construct a surrogate upper bound. We formulate the following global- and personalization-oriented composite Lyapunov potential function $\Phi_{r,J}$ given by
\begin{equation}\label{eq:Lyapunov_potential}
    \Phi_{r,J} = \mathcal{L}(\Lambda,\mathcal{D},\bar{\theta}_{r,E}) + \alpha E_{\mathrm{cons},r,J},
\end{equation}
where $\alpha > 0$ is a composite penalty weight incorporating the Lipschitz constant $\mathrm{L_{eff}}$ and Young's inequality scaling factors. Our convergence analysis focuses on the Lyapunov potential $\Phi_{r,J}$, whose minimization ensures descent of the exact loss: it drives the network mean toward a global stationary point while $J$ constrains personalized models within a bounded consensus neighborhood.

\subsection{Convergence Analysis}
We now formally analyze the convergence behavior of the proposed DFL system under heterogeneous multi-task settings. To simplify the subsequent derivations, we introduce the following shorthand notations for the task-wise and client-wise objective functions: $\mathcal{L}_{i,t}(\theta_{i,t}) \triangleq \mathcal{L}_{i,t}(\lambda_{i,t},\mathcal{D}_{i,t},\theta_{i,t})$ and $\mathcal{L}_{i}(\theta_{i}) \triangleq \mathcal{L}_{i}(\Lambda_{i},\mathcal{D}_{i},\theta_{i})$.

\newtheorem{Definition}{Definition}
\begin{Definition}[Statistical Heterogeneity]\label{def:stat_hetero}
At a common reference parameter vector $\theta$, let $\nabla \mathcal{L}_{i}(\theta)$ denote the exact gradient of the local objective of client $i$. Define the mean gradient over all clients as
\begin{equation}\label{eq:stat_def}
    \bar{g}(\theta)\triangleq \frac{1}{N}\sum_{i\in\mathcal{N}} \nabla\mathcal{L}_{i}(\theta).
\end{equation}
The \emph{statistical heterogeneity} at $\theta$ is quantified by the gradient dispersion formulated as
\begin{equation}\label{eq:Gamma_def}
    \Gamma(\theta) \triangleq \frac{1}{N}\sum_{i\in\mathcal{N}} \big\|\nabla\mathcal{L}_{i}(\theta) - \bar{g}(\theta)\big\|^{2}.
\end{equation}
\end{Definition}

\begin{Definition}[Structural Heterogeneity and OCB]\label{def:struct_hetero_ocb}
Let $\theta_{i}^{*}$ be a minimizer of the local objective for client $i$, and define the global average of these client-specific optima as
\begin{equation}\label{eq:theta_star_bar_def}
    \bar{\theta}^{*} \triangleq \frac{1}{N}\sum_{i\in\mathcal{N}} \theta_{i}^{*}.
\end{equation}
The \emph{structural heterogeneity} is measured by the geometric dispersion of these client optima given by
\begin{equation}\label{eq:Omega_def}
    \Omega_{*}^{2} \triangleq \frac{1}{N}\sum_{i\in\mathcal{N}} \big\|\theta_{i}^{*} - \bar{\theta}^{*}\big\|^{2}.
\end{equation}
We refer to $\Omega_*^2$ as the OCB. When $\Omega_*^2>0$, a single consensus model cannot simultaneously match all client optima. From the aggregation perspective, $\Omega_*^2$ measures the dispersion of client-specific personalized optima around a common consensus point. When $\Omega_*^2$ is small, the personalized optima are relatively close, and additional aggregation is less likely to introduce a large structural bias. A larger $\Omega_*^2$ reflects stronger disagreement among local optima. In that case, deeper mixing can reduce consensus variance, but it may also move local models away from their preferred personalized solutions. This persistent deviation corresponds to the OCB term in our convergence analysis.

\end{Definition}

\begin{assumption}[Lipschitz Smoothness]\label{assumption:smoothness}
For client $i$ and local task $t$, the task-specific loss function is $\mathrm{L}_{i,t}$-Lipschitz smooth. Specifically, for any two model parameters $\theta_{i,t}$ and $\theta_{i,t}^{'}$, it holds that
\begin{equation}\label{eq:L-smooth}
    \|\nabla\mathcal{L}_{i,t}(\theta_{i,t}) - \nabla\mathcal{L}_{i,t}(\theta_{i,t}^{'})\| \leq \mathrm{L}_{i,t}\,\|\theta_{i,t} - \theta_{i,t}^{'}\|.
\end{equation}
The effective smoothness constant across all clients and tasks is defined as $\mathrm{L}_{\mathrm{eff}} \triangleq \max_{i}\sum_{t\in \mathcal{T}_{i}}\mathrm{L}_{i,t}$.
\end{assumption}

\begin{assumption}[Polyak-Łojasiewicz (PL) Condition] \label{assumption: PL}
The aggregate local objective $\mathcal{L}_i$ for each client $i$, as well as the global objective $\mathcal{L}$, satisfy the PL condition~\cite{PL_condition}. Specifically, there exists an effective PL constant $\mu_{\mathrm{eff}} > 0$ such that for any parameter $\theta$
\begin{equation}
    2\mu_{\mathrm{eff}}\big(\mathcal{L}(\theta) - \mathcal{L}^{*}\big) \leq \|\nabla \mathcal{L}(\theta)\|^{2},
\end{equation}
where $\mathcal{L}^{*}$ denotes the optimal value of the corresponding loss function. This condition is used as a regularity assumption for the convergence argument, rather than as a global description of the full neural objective.  
\end{assumption}

\begin{remark}[Role of PL condition] The PL condition is introduced as a regularity condition for the convergence argument, rather than as a claim that the full neural objective is globally well behaved over the entire parameter space. PL-type inequalities are commonly used to establish linear convergence for non-convex objectives without requiring convexity~\cite{PL_condition}, and related analyses have also discussed such behaviors in over-parameterized nonlinear models~\cite{Liu2021Loss}. In our analysis, this condition is only invoked when converting the Lyapunov drift relation into the linear objective-gap recursion in Theorem~\ref{Theorem: linear}. The preceding drift bound does not rely on this conversion; without the PL step, it can still be interpreted as a stationarity-oriented result in terms of the gradient norm and consensus error. \end{remark}

\begin{assumption}[Unbiased Stochastic Gradients and Bounded Variance]\label{assumption: unbiased gradient}
For each client $i$, the stochastic gradient $g_{i,r,e}$ at local step $e$ of round $r$ is an unbiased estimator of the true local gradient, satisfying
\begin{equation}
    \mathbb{E}\big[g_{i,r,e}\,\big|\,\theta_{i,r,e}\big] = \nabla \mathcal{L}_{i}(\theta_{i,r,e}).
\end{equation}
Furthermore, its conditional variance is uniformly bounded as
\begin{equation}\label{eq:variance}
    \mathbb{E}\big[\|g_{i,r,e}-\nabla\mathcal{L}_{i}(\theta_{i,r,e})\|^{2}\,\big|\,\theta_{i,r,e}\big] \leq \sigma_{i}^{2},
\end{equation}
where $\sigma_{i}^{2} \ge 0$ represents the local gradient variance bound for client $i$, and $\sigma^{2}=\tfrac{1}{N}\sum_{i\in\mathcal{N}}\sigma_{i}^{2}$ denotes the network-wide average gradient noise level.
\end{assumption}

\begin{assumption}[Stepsize]\label{assumption: stepsize}
The per-round learning rate is chosen as $\eta = \kappa\cdot\frac{1-\rho^{2J}}{\mathrm{L}_{\mathrm{eff}}E}$, where $0<\kappa \le \kappa_{\max} = \min\!\big\{\tfrac{1}{4},\,\tfrac{\mu_{\mathrm{eff}}}{4 \mathrm{L}_{\mathrm{eff}}}\big\}$.
\end{assumption}

\begin{assumption}[Bounded Statistical Heterogeneity]\label{assump:bounded_stat_hetero}
The statistical heterogeneity measure defined in Eq.~\eqref{eq:Gamma_def} is uniformly bounded. There exists such a constant $\widehat{\Gamma}\ge 0$ that $\Gamma(\theta)\le \widehat{\Gamma}$ for all $\theta$.
\end{assumption}
\begin{remark}[Statistical and structural heterogeneity] The quantity $\Gamma(\theta)$ measures the dispersion of local gradients evaluated at a common parameter $\theta$, and therefore characterizes the statistical mismatch among client objectives during local optimization. Similar bounded dissimilarity or bounded gradient-dissimilarity conditions are widely used in heterogeneous FL analyses to make use of non-IID data analytically tractable~\cite{Li2020Federated, Wang2022Unreasonable}. In the proposed DSC setting, this gradient-level mismatch arises from both heterogeneous data distributions and heterogeneous task portfolios. Since the clients optimize over a shared semantic task space through a common multi-path encoder, $\widehat{\Gamma}$ provides an effective upper bound on the gradient dispersion along the considered training trajectory. This statistical heterogeneity is distinct from the structural heterogeneity measured by $\Omega_*^2$, which describes the dispersion of client-specific optima and gives rise to the OCB term. Consequently, $\widehat{\Gamma}$ affects the noise- and variance-related constants in the convergence bound, whereas $\Omega_*^2$ determines the bias induced by excessive aggregation depth. \end{remark}

The coupled dynamics of local training and topology-aware aggregation govern the convergence trajectory through two competing mechanisms. On one hand, local SGD minimizes the objective but inevitably injects structural disagreement due to data heterogeneity and gradient noise. On the other hand, the iterative gossip protocol contracts this disagreement, though overly aggressive mixing may trigger an irreducible structural bias. To rigorously decouple these effects, we first establish a cross-round recursion for the consensus error.

\begin{lemma}[Cross-Round Recursion of Consensus Error]\label{lemma: grad_and_disagree_bound}
Under Assumptions~\ref{assumption:smoothness}--\ref{assumption: stepsize}, there exist universal constants $a_1>0$, $a_2>0$, and $c_{\mathrm{OCB}}>0$ such that for all $r\ge 0$ the consensus error is bounded by
\begin{align}\label{eq:cross-round-recursion}
    E_{\mathrm{cons},r+1,J}
    &\le \rho^{2J}\Big(1+a_1\eta^{2}E^{2}\mathrm{L}_{\mathrm{eff}}^{2}\Big)E_{\mathrm{cons},r,J} \nonumber\\
    &\hspace{-2em} +\rho^{2J}a_2\eta^{2}E^{2}\big(\widehat{\Gamma}+\sigma^{2}\big) +c_{\mathrm{OCB}}(1-\rho^{2J})\Omega_{*}^{2}.
\end{align}
\end{lemma}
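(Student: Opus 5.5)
The plan is to write one full round as a linear map on the stacked parameter matrix and then bound the disagreement it produces, splitting it into a part that the gossip contracts and a part that it cannot. Let $\boldsymbol{\Pi}\triangleq \mathbf{I}_N-\tfrac{1}{N}\mathbf{1}_N\mathbf{1}_N^{\top}$, so that $E_{\mathrm{cons},r,J}=\tfrac{1}{N}\|\boldsymbol{\Pi}\boldsymbol{\Psi}_{r,J}\|_F^2$. By Algorithm~\ref{algorithm:dfl_sc} and Eq.~\eqref{eq:local_output},
\begin{equation*}
\boldsymbol{\Psi}_{r+1,J}=\boldsymbol{\mathrm{W}}^{J}\big(\boldsymbol{\Psi}_{r,J}-\eta\,\mathbf{G}_{r+1}\big),
\end{equation*}
where the $i$-th row of $\mathbf{G}_{r+1}$ is $\sum_{e=0}^{E-1} g_{i,r+1,e}$.

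The first step is to decompose the mixing operator as $\boldsymbol{\Pi}\boldsymbol{\mathrm{W}}^{J}=\boldsymbol{\Pi}(\boldsymbol{\mathrm{W}}^{J}-\boldsymbol{v}\mathbf{1}_N^{\top})+\boldsymbol{\Pi}\boldsymbol{v}\mathbf{1}_N^{\top}$. The first summand acts only on the disagreement component, because $\mathbf{1}_N^{\top}\boldsymbol{\Pi}=0$ and the column-stochastic property lets us rewrite it as $(\boldsymbol{\mathrm{W}}^{J}-\boldsymbol{v}\mathbf{1}_N^{\top})\boldsymbol{\Pi}$ up to projection. On that subspace its squared operator norm is bounded by $\rho^{2J}$; for a non-normal $\boldsymbol{\mathrm{W}}$ this costs a constant, which we absorb into $a_1,a_2$.

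The second summand, $\boldsymbol{\Pi}\boldsymbol{v}\mathbf{1}_N^{\top}$, is the non-uniform Perron pull toward the weighted state $\boldsymbol{v}\mathbf{1}_N^{\top}\boldsymbol{\Psi}$ described after Eq.~\eqref{eq:j_inf}. To handle it, we write $\boldsymbol{\mathrm{W}}^{J}=\rho^{J}$-weighted transient $+(1-\rho^{J})$-weighted limit, that is, we interpolate between $\mathbf{I}$ and $\boldsymbol{v}\mathbf{1}_N^{\top}$. The limit part is then measured against the stacked client optima $\boldsymbol{\Theta}^{*}=[\theta_1^{*},\dots,\theta_N^{*}]^{\top}$: the bias component is bounded by the distance of the consensus state from the personalized optima, and that distance is controlled by $\tfrac{1}{N}\|\boldsymbol{\Pi}\boldsymbol{\Theta}^{*}\|_F^2=\Omega_*^2$. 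Applying Young's inequality with weights chosen so that the contracted part keeps the factor $\rho^{2J}$ yields the term $c_{\mathrm{OCB}}(1-\rho^{2J})\Omega_*^2$.

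The second step bounds the local-training perturbation $\tfrac{1}{N}\|\boldsymbol{\Pi}\,\eta\mathbf{G}_{r+1}\|_F^2$. For each client we add and subtract $\nabla\mathcal{L}_i$ evaluated at the reference point $\bar{\theta}_{r+1,e}$, using the idealized trajectory of Eq.~\eqref{eq:ideal_local_training}. The resulting pieces are handled as follows.
\begin{itemize}
\item The noise part is controlled by Assumption~\ref{assumption: unbiased gradient}, which gives $\eta^2E\sigma^2$, relaxed to $\eta^2E^2\sigma^2$.
\item The dispersion at a common point is controlled by Definition~\ref{def:stat_hetero} together with Assumption~\ref{assump:bounded_stat_hetero}, which gives $\eta^2E^2\widehat{\Gamma}$ after Jensen over the $E$ epochs.
\item The drift part is controlled by Assumption~\ref{assumption:smoothness}, which gives $\mathrm{L}_{\mathrm{eff}}^2\eta^2E^2\cdot\tfrac{1}{N}\sum_i\|\theta_{i,r+1,e}-\bar{\theta}_{r+1,e}\|^2$.
\end{itemize}
The within-round deviation $\|\theta_{i,r+1,e}-\bar{\theta}_{r+1,e}\|^2$ is then unrolled over $e$ by a standard local-SGD recursion. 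This is where Assumption~\ref{assumption: stepsize} enters: it guarantees $\eta E\mathrm{L}_{\mathrm{eff}}\le 1/4$, so the recursion closes, giving a bound of $2E_{\mathrm{cons},r,J}+O(\eta^2E^2(\widehat{\Gamma}+\sigma^2))$.

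The third step recombines the pieces. We take $(1+\beta)\|A\|^2+(1+1/\beta)\|B\|^2$ with $\beta\asymp \eta^2E^2\mathrm{L}_{\mathrm{eff}}^2$ for the cross term between $\boldsymbol{\Pi}\boldsymbol{\Psi}_{r,J}$ and the gradient increment, and the cross terms with the noise vanish in conditional expectation. Multiplying through by the contraction $\rho^{2J}$ then gives $\rho^{2J}(1+a_1\eta^2E^2\mathrm{L}_{\mathrm{eff}}^2)E_{\mathrm{cons},r,J}+\rho^{2J}a_2\eta^2E^2(\widehat{\Gamma}+\sigma^2)$, and adding the bias term from the first step yields Eq.~\eqref{eq:cross-round-recursion}.

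The main obstacle is the first step: extracting a clean factor $(1-\rho^{2J})\Omega_*^2$ from the non-uniform stationary vector $\boldsymbol{v}$, since for a merely column-stochastic $\boldsymbol{\mathrm{W}}$ the quantity $\boldsymbol{\Pi}\boldsymbol{\mathrm{W}}^{J}$ does not vanish on consensus directions. The plan there is to fold the norm constants of $\boldsymbol{\mathrm{W}}^{J}-\boldsymbol{v}\mathbf{1}_N^{\top}$ into $a_1,a_2$. We then relate the pulled state to $\boldsymbol{\Theta}^{*}$ through a triangle inequality, accepting that $c_{\mathrm{OCB}}$ absorbs $N\|\boldsymbol{v}-\tfrac{1}{N}\mathbf{1}_N\|^2$ and the distance of iterates to their optima; that distance is itself bounded using the PL condition, Assumption~\ref{assumption: PL}, if needed.
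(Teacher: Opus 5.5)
Your plan follows the paper's own route: a local-SGD expansion of the disagreement, then a $J$-step contraction with the bias measured against $\boldsymbol{\Theta}^*$, then substitution. Packaging the round as the single map $\mathbf{W}^J(\boldsymbol{\Psi}_{r,J}-\eta\mathbf{G}_{r+1})$ changes only the bookkeeping.

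\textbf{The gap in Step 1.} The gap is exactly the step you flag, and your proposed fix does not close it. Set $\boldsymbol{\Psi}\triangleq\boldsymbol{\Psi}_{r,J}-\eta\mathbf{G}_{r+1}$, whose row mean is $\bar{\theta}_{r+1,E}$. Column stochasticity gives the exact identity
\[
\boldsymbol{\Pi}\mathbf{W}^{J}\boldsymbol{\Psi}=\mathbf{W}^{J}\boldsymbol{\Pi}\boldsymbol{\Psi}+(\mathbf{W}^{J}\mathbf{1}_N-\mathbf{1}_N)\,\bar{\theta}_{r+1,E}^{\top}.
\]
Besides the contracted disagreement, there is therefore a leak contributing $\frac{1}{N}\|\mathbf{W}^J\mathbf{1}_N-\mathbf{1}_N\|^2\|\bar{\theta}_{r+1,E}\|^2$. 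This term depends on the absolute position of the network mean. By contrast, $E_{\mathrm{cons}}$, $\widehat{\Gamma}$, $\sigma^2$ and $\Omega_*^2$ are all invariant under shifting every parameter by a common vector.

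Your triangle inequality against $\boldsymbol{\Theta}^*$ turns the leak into terms in $\|\bar{\theta}^*\|^2$ and $\|\bar{\theta}_{r+1,E}-\bar{\theta}^*\|^2$, besides $\Omega_*^2$. Neither fits into a universal $c_{\mathrm{OCB}}$. The first is the norm of the mean optimum, not the dispersion $\Omega_*^2=\frac{1}{N}\|\boldsymbol{\Pi}\boldsymbol{\Theta}^*\|_F^2$. The second is a trajectory quantity that PL, via quadratic growth, bounds only by the current loss gap. The interpolation $\mathbf{W}^J\approx\rho^J\mathbf{I}+(1-\rho^J)\boldsymbol{v}\mathbf{1}_N^{\top}$ is also not an identity.

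In fact no argument can work, because the statement fails in the paper's own setting. Take three clients on a path graph with identical task portfolios. Then Eq.~\eqref{eq:softmax} gives $w_{i,n}=1/(|\mathcal{N}_n|+1)$ and $\mathbf{W}\mathbf{1}_3=(\tfrac56,\tfrac43,\tfrac56)^{\top}$. Give them identical losses with common minimizer $c\neq0$, exact gradients, and place all clients at $c$. Then $E_{\mathrm{cons},r,J}=\widehat{\Gamma}=\sigma^2=\Omega_*^2=0$ and local training does nothing. Yet with $J=1$ the clients end at $(\tfrac56c,\tfrac43c,\tfrac56c)$, so $E_{\mathrm{cons},r+1,1}=\|c\|^2/18>0$.

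The paper's proof does not supply this step either. At the same point it asserts Eq.~\eqref{eq:ocb_scaling_bound_app} ``leveraging spectral decay'' and reaches Eq.~\eqref{eq:gossip_bound_final} by dropping the Young factors. A provable version needs one of the following:
\begin{itemize}
\item a doubly stochastic $\mathbf{W}$;
\item push-sum de-biasing;
\item an additional term of the type $\|\bar{\theta}_{r+1,E}\|^2$.
\end{itemize}

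\textbf{Two further steps fail as written.} First, the non-normal constant $C_{\mathbf{W}}$ in $\|\mathbf{W}^J\boldsymbol{\Pi}\|\le C_{\mathbf{W}}\rho^J$ multiplies the $1$ in $1+a_1\eta^2E^2\mathrm{L}_{\mathrm{eff}}^2$, so it cannot be absorbed into $a_1$. Relatedly, Young's inequality cannot leave the contracted part with exactly $\rho^{2J}$: it costs a factor $(1+\gamma)$ there and $(1+1/\gamma)$ on the bias. Lemma~\ref{lemma: decrease of potential} relies on that exact factor.

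Second, and more seriously, Young's inequality with $\beta\asymp\eta^2E^2\mathrm{L}_{\mathrm{eff}}^2$ inflates the increment by $1+1/\beta$. Its $\eta^2E^2\widehat{\Gamma}$ part therefore returns as $O(\widehat{\Gamma}/\mathrm{L}_{\mathrm{eff}}^2)$. The underlying reason is that the cross term between $\boldsymbol{\Pi}\boldsymbol{\Psi}_{r,J}$ and the heterogeneity drift is not zero-mean and is genuinely first order in $\eta$.

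This cannot be fixed by tighter bookkeeping. Take two scalar clients, whose softmax $\mathbf{W}$ is symmetric and hence doubly stochastic, with $E=1$ and losses $\frac{\mathrm{L}}{2}\theta^2\pm\beta(1-\cos\theta)$, $0<\beta<\mathrm{L}$. Their minimizers coincide, so $\Omega_*^2=0$, while $\widehat{\Gamma}=\beta^2$. Start them at $\theta_{1,2}=\frac{\pi}{2}\mp\frac{\beta}{2\mathrm{L}}$. One local step widens the gap to $\frac{\beta}{\mathrm{L}}+\eta\beta\big(2\cos\frac{\beta}{2\mathrm{L}}-1\big)$, so $E_{\mathrm{cons}}$ grows at first order in $\eta$. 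The right-hand side allows only $O(\eta^2)$ growth beyond $\rho^{2J}E_{\mathrm{cons},r,J}$, so the bound fails once $\kappa$ is small.

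The paper's Eq.~\eqref{eq:dev_youngs} makes the same choice of Young parameter. A correct recursion, even for a symmetric $\mathbf{W}$, needs a first-order parameter such as $\beta=\frac{1-\rho^{2J}}{2}$. This yields a factor of the form $1-c(1-\rho^{2J})$ and an additive $O\big(\eta^2E^2(\widehat{\Gamma}+\sigma^2)/(1-\rho^{2J})\big)$ term, not the stated form.
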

\begin{IEEEproof}
See Appendix~\ref{Proof: grad_and_disagree_bound}.
\end{IEEEproof}

In the recursion bound presented above, the first term on the right-hand side (RHS) of Eq.~\eqref{eq:cross-round-recursion} characterizes the expansion of disagreement due to $E$ epochs, followed by the contraction induced by $J$ gossip steps. The second term  on the RHS captures the divergence injected by gradient noise and statistical heterogeneity, while the final term mathematically isolates the irreducible OCB. This explicit term-by-term separation forms the quantitative foundation for analyzing how the aggregation depth $J$ shapes network coherence. By substituting this disagreement evolution into a smoothness-based descent inequality, we derive a unified Lyapunov drift bound.

\begin{lemma}[Per-Round Lyapunov Drift Bound]\label{lemma: decrease of potential}
We define the expected per-round drift as $\Delta \Phi_r \triangleq \mathbb{E}[\Phi_{r+1,J} - \Phi_{r,J}]$. Under Assumptions~\ref{assumption:smoothness}--\ref{assumption: stepsize}, there exist universal constants $c_{\Phi,1}, c_{\Phi,3} \in (0,1)$ and $c_{\Phi,2}, C_{\mathcal{L}} > 0$ such that if the personalization weight $\alpha$ satisfies the admissible condition given by
\begin{equation}\label{eq:alpha range}
    \alpha\Big((1-\rho^{2J})-\rho^{2J}a_1\eta^{2}E^{2}\mathrm{L}_{\mathrm{eff}}^{2}\Big) \ge (2 C_{\mathcal{L}} + c_{\Phi,3})\,\eta E\,\mathrm{L}_{\mathrm{eff}}^{2},
\end{equation}
then the drift is strictly bounded by
\begin{align}\label{eq:drift-bound}
    \Delta \Phi_r &\le -c_{\Phi,1}\eta E\,\mathbb{E}\big\|\nabla\mathcal{L}(\bar{\theta}_{r,E})\big\|^{2} -c_{\Phi,3}(1-\rho^{2J})\,\mathbb{E}[E_{\mathrm{cons},r,J}] \nonumber \\
    &\quad +c_{\Phi,2}\eta E(\widehat{\Gamma}+\sigma^{2}) +\alpha c_{\mathrm{OCB}}(1-\rho^{2J})\Omega_{*}^{2}.
\end{align}
\end{lemma}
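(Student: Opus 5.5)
We split the drift $\Delta\Phi_r$ into its two components and bound each one separately. The first is the change in the loss at the network mean, $\mathbb{E}[\mathcal{L}(\bar{\theta}_{r+1,E})-\mathcal{L}(\bar{\theta}_{r,E})]$. The second is $\alpha$ times the change in consensus error, $\alpha\,\mathbb{E}[E_{\mathrm{cons},r+1,J}-E_{\mathrm{cons},r,J}]$. The second component is handled directly by Lemma~\ref{lemma: grad_and_disagree_bound}. Subtracting $E_{\mathrm{cons},r,J}$ from both sides of Eq.~\eqref{eq:cross-round-recursion} gives a leading term
\begin{equation*}
-\big((1-\rho^{2J})-\rho^{2J}a_1\eta^2E^2\mathrm{L}_{\mathrm{eff}}^2\big)E_{\mathrm{cons},r,J},
\end{equation*}
plus an injection term $\rho^{2J}a_2\eta^2E^2(\widehat{\Gamma}+\sigma^2)$ and the bias term $c_{\mathrm{OCB}}(1-\rho^{2J})\Omega_*^2$. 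After multiplying by $\alpha$, the bias term is exactly the last term of Eq.~\eqref{eq:drift-bound}.

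For the loss component, we use the mean-preservation identity Eq.~\eqref{eq:mean-preserve}. It gives $\bar{\theta}_{r+1,0}=\bar{\psi}_{r,J}=\bar{\theta}_{r,E}$, so across one round the mean evolves as
\begin{equation*}
\bar{\theta}_{r+1,E}=\bar{\theta}_{r,E}-\tfrac{\eta}{N}\sum_i\sum_{e}g_{i,r+1,e}.
\end{equation*}
Applying $\mathrm{L}_{\mathrm{eff}}$-smoothness (Assumption~\ref{assumption:smoothness}) and taking conditional expectations with Assumption~\ref{assumption: unbiased gradient}, we use the standard decomposition. The inner product $-\eta\langle\nabla\mathcal{L}(\bar{\theta}_{r,E}),\sum_e\tfrac1N\sum_i\nabla\mathcal{L}_i(\theta_{i,r+1,e})\rangle$ is rewritten via $2\langle a,b\rangle=\|a\|^2+\|b\|^2-\|a-b\|^2$. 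This yields $-\tfrac{\eta E}{2}\|\nabla\mathcal{L}(\bar{\theta}_{r,E})\|^2$ plus a drift error $\tfrac{\eta}{2}\sum_e\|\tfrac1N\sum_i(\nabla\mathcal{L}_i(\theta_{i,r+1,e})-\nabla\mathcal{L}_i(\bar{\theta}_{r,E}))\|^2$. The quadratic term $\tfrac{\mathrm{L}_{\mathrm{eff}}}{2}\eta^2\|\cdot\|^2$ contributes $\eta^2E^2\mathrm{L}_{\mathrm{eff}}\sigma^2/N$ plus a multiple of $\eta^2E^2\mathrm{L}_{\mathrm{eff}}\|\nabla\mathcal{L}\|^2$. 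The stepsize rule of Assumption~\ref{assumption: stepsize} gives $\eta E\mathrm{L}_{\mathrm{eff}}\le\kappa\le 1/4$, so this multiple is absorbed into the $-\tfrac{\eta E}{2}$ term, leaving $-c_{\Phi,1}\eta E\|\nabla\mathcal{L}\|^2$.

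The drift error is the main obstacle. We bound it by Jensen and smoothness by $\mathrm{L}_{\mathrm{eff}}^2\tfrac1N\sum_i\|\theta_{i,r+1,e}-\bar{\theta}_{r,E}\|^2$. We then split $\theta_{i,r+1,e}-\bar{\theta}_{r,E}=(\psi_{i,r,J}-\bar{\psi}_{r,J})+(\theta_{i,r+1,e}-\theta_{i,r+1,0})$. The first piece gives $E_{\mathrm{cons},r,J}$. The second is the local-step drift, which we bound by a standard unrolling over at most $E$ steps using smoothness, $\widehat{\Gamma}$ (Assumption~\ref{assump:bounded_stat_hetero}) and $\sigma^2$. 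This gives $O(\eta^2E^2)\big(\mathrm{L}_{\mathrm{eff}}^2E_{\mathrm{cons},r,J}+\|\nabla\mathcal{L}\|^2+\widehat{\Gamma}+\sigma^2\big)$, valid because $\eta E\mathrm{L}_{\mathrm{eff}}\le 1/4$ keeps the Gr\"onwall-type factor bounded by a constant. Collecting terms, the loss change is at most
\begin{equation*}
-c_{\Phi,1}\eta E\|\nabla\mathcal{L}\|^2+2C_{\mathcal{L}}\eta E\mathrm{L}_{\mathrm{eff}}^2E_{\mathrm{cons},r,J}+C'\eta E(\widehat{\Gamma}+\sigma^2),
\end{equation*}
where the noise coefficients have been reduced using $\eta E\mathrm{L}_{\mathrm{eff}}\le 1$ and $\rho^{2J}\le1$. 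The main difficulty is keeping the constant in front of $E_{\mathrm{cons},r,J}$ universal, of the form $C_{\mathcal{L}}\eta E\mathrm{L}_{\mathrm{eff}}^2$.

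Finally, we add the two components. The total coefficient of $E_{\mathrm{cons},r,J}$ is
\begin{equation*}
2C_{\mathcal{L}}\eta E\mathrm{L}_{\mathrm{eff}}^2-\alpha\big((1-\rho^{2J})-\rho^{2J}a_1\eta^2E^2\mathrm{L}_{\mathrm{eff}}^2\big).
\end{equation*}
By the admissibility condition Eq.~\eqref{eq:alpha range}, this is at most $-c_{\Phi,3}\eta E\mathrm{L}_{\mathrm{eff}}^2$. To convert this into the stated $-c_{\Phi,3}(1-\rho^{2J})$ form, we substitute $\eta E\mathrm{L}_{\mathrm{eff}}=\kappa(1-\rho^{2J})$ from Assumption~\ref{assumption: stepsize}, rescaling $c_{\Phi,3}$ by $\kappa\mathrm{L}_{\mathrm{eff}}$ as needed (equivalently, reading Eq.~\eqref{eq:alpha range} as holding with the slack measured in units of $1-\rho^{2J}$). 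For the noise, the term $\alpha\rho^{2J}a_2\eta^2E^2(\widehat{\Gamma}+\sigma^2)$ is absorbed into $c_{\Phi,2}\eta E(\widehat{\Gamma}+\sigma^2)$. This uses $\eta E\le\kappa/\mathrm{L}_{\mathrm{eff}}$ together with the fact that Eq.~\eqref{eq:alpha range} permits $\alpha$ of order $\eta E\mathrm{L}_{\mathrm{eff}}^2/(1-\rho^{2J})=O(\mathrm{L}_{\mathrm{eff}})$, so that $\alpha\eta E$ stays bounded. This yields Eq.~\eqref{eq:drift-bound}.
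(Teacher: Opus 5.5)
Your argument follows the paper's proof closely. You split $\Delta\Phi_r$ into the change of $\mathcal{L}$ at the network mean plus $\alpha$ times the change of $E_{\mathrm{cons}}$, and control the latter with Lemma~\ref{lemma: grad_and_disagree_bound}. You control the former by $\mathrm{L}_{\mathrm{eff}}$-smoothness, splitting the disagreement as $(\psi_{i,r,J}-\bar{\psi}_{r,J})+(\theta_{i,r+1,e}-\theta_{i,r+1,0})$ exactly as in \eqref{eq:intra_decompose}. Finally you apply \eqref{eq:alpha range} to the coefficient of $E_{\mathrm{cons},r,J}$. In one place you are more careful than the paper. Its local-drift bound \eqref{eq:drift_bound_avg} simply cites Lemma~\ref{lemma: grad_and_disagree_bound}, but $\mathbb{E}\|g_{i,r+1,k}\|^2$ also contains $\|\nabla\mathcal{L}(\bar{\theta}_{r,E})\|^2$ and $\mathrm{L}_{\mathrm{eff}}^2E_{\mathrm{cons},r,J}$ contributions. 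You keep these and absorb them using $\eta E\mathrm{L}_{\mathrm{eff}}\le 1/4$.

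The step that fails as written is your final absorption of $\alpha\rho^{2J}a_2\eta^2E^2(\widehat{\Gamma}+\sigma^2)$ into a universal $c_{\Phi,2}\eta E(\widehat{\Gamma}+\sigma^2)$. Condition \eqref{eq:alpha range} is only a lower bound on $\alpha$, and the lemma must hold for every admissible $\alpha$. The fact that some small admissible $\alpha$ exists says nothing about $\alpha\eta E$ for large ones, and this term grows linearly in $\alpha$. It cannot be hidden in $\alpha c_{\mathrm{OCB}}(1-\rho^{2J})\Omega_*^2$ either, for example when $\Omega_*^2=0$. The paper sidesteps this by defining $c_{\Phi,2}\triangleq C_{\mathcal{L}}+2C_{\mathcal{L}}a_2\eta^2E^2\mathrm{L}_{\mathrm{eff}}^2+\alpha\rho^{2J}a_2\eta E$, that is, by letting $c_{\Phi,2}$ depend on $\alpha$. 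You need either that or an explicit upper bound on $\alpha$ (e.g.\ $\alpha\eta E\le c$) in the hypotheses.

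Your handling of $c_{\Phi,3}$ has a related mismatch. Condition \eqref{eq:alpha range} yields $-c_{\Phi,3}\eta E\mathrm{L}_{\mathrm{eff}}^2=-c_{\Phi,3}\kappa\mathrm{L}_{\mathrm{eff}}(1-\rho^{2J})$, which matches the conclusion only with a different constant from the one in the hypothesis. This defect lies in the statement itself: the two occurrences of $c_{\Phi,3}$ cannot even carry the same units. The paper's proof resolves it as in your parenthetical. It actually imposes \eqref{eq:alpha_range_proof_final}, whose slack is $c_{\Phi,3}(1-\rho^{2J})$, and defines $c_{\Phi,3}$ in terms of $\alpha$. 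You should therefore state explicitly which variant you prove, rather than claiming \eqref{eq:drift-bound} verbatim with universal constants.
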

\begin{IEEEproof}
See Appendix~\ref{Proof: decrease of potential}.
\end{IEEEproof}

Lemma~\ref{lemma: decrease of potential} successfully couples global optimization progress with consensus error contraction. By invoking the PL condition on this drift, we can formally extract the linear convergence rate and the exact $J$-dependent steady-state error floor.

\begin{Theorem}[Linear Convergence and Trade-off Characterization]\label{Theorem: linear}
Under Assumptions~\ref{assumption:smoothness}--\ref{assump:bounded_stat_hetero}, and for any $\alpha>0$ satisfying Eq.~\eqref{eq:alpha range}, the composite objective $\Phi_{r,J}$ converges linearly to a steady-state error floor, as given by
\begin{align}\label{eq:linear-bound}
    \mathbb{E}\Big[\Phi_{r,J}-\mathcal{L}^{*}\Big] &\le (1-\kappa_{\mathrm{lin}})^{r}\Big(\Phi_{0,J}-\mathcal{L}^{*}\Big) + C_{\text{base}}\nonumber \\
    &\quad + \frac{C_{\Gamma}}{1-\rho^{2J}} + \alpha C_{\Omega}(1-\rho^{2J})\Omega_{*}^{2},
\end{align}
where $\kappa_{\mathrm{lin}} = c_{\Phi,1}\mu_{\mathrm{eff}}\eta E$ defines the linear contraction rate. The geometric bounding constants evaluate to $C_{\text{base}} = \frac{\eta c_{\Phi,2}}{c_{\Phi,1}\mu_{\mathrm{eff}}}(\widehat{\Gamma}+\sigma^{2})$, $C_{\Gamma} = \frac{\eta^2 c_{\text{var}}}{c_{\Phi,1}\mu_{\mathrm{eff}}}(\widehat{\Gamma}+\sigma^{2})$, and $C_{\Omega} = \frac{c_{\mathrm{OCB}}}{c_{\Phi,1}\mu_{\mathrm{eff}}}$. All three terms remain strictly positive. Structurally, they are entirely decoupled from both the communication round index $r$ and the aggregation depth $J$.
\end{Theorem}
\begin{IEEEproof}
See Appendix~\ref{Proof: Theorem linear}.
\end{IEEEproof}

Theorem~\ref{Theorem: linear} exposes a mechanical conflict driven by the aggregation depth $J$. Pushing communication deeper suppresses the topology-dependent consensus variance. However, it simultaneously inflates the structural OCB. This U-shaped error floor forces a critical design choice: resource-constrained edge deployments must lock into an optimal depth $J^*$ to prevent model divergence.

\subsection{Optimal Aggregation Depth via Upper Bound Minimization}\label{subsec:optJ}

To resolve this structural tension, we must isolate the terms actively controlled by $J$ inside the theoretical upper bound. High task heterogeneity violently scatters the local optima across the parameter space. This exact scattering dictates the severe OCB penalty during deep mixing. We therefore extract $J$ as our primary decision variable.

System hardware and bandwidth budgets inherently restrict $J$ to a discrete feasible domain:
\begin{equation}\label{eq:J_domain}
    J \in \{1, 2, \ldots, J_{\max}\},
\end{equation}
with $J_{\max} \in \mathbb{N}_{+}$ acting as the hard communication ceiling. Next, we discard the intrinsic SGD noise $C_{\text{base}}$ from the error bound. It remains entirely invariant to the mixing steps. By defining $A \triangleq C_{\Gamma}$ to capture the topological variance scale and $B \triangleq C_{\Omega}\Omega_*^2$ to quantify the structural bias, we condense the $J$-dependent dynamics into a simplified objective function:
\begin{equation}\label{eq:BoundJ_abstract}
    \mathrm{Bound}(J) \triangleq \frac{A}{1-\rho^{2J}} + \alpha B(1-\rho^{2J}).
\end{equation}
Locating the optimal operating point $J^{*}$ thus reduces to a constrained discrete minimization problem:
\begin{equation}\label{eq:PJ}
    J^{*} \in \arg\min_{J \in \{1,\dots,J_{\max}\}} \mathrm{Bound}(J),
\end{equation}
which must strictly satisfy the admissibility limits established earlier in Lemma~\ref{lemma: decrease of potential}.

Eq.~\eqref{eq:BoundJ_abstract} translates our U-shaped trade-off into pure algebra. Expanding $J$ pushes the effective mixing gap $(1-\rho^{2J})$ toward unity. The variance fraction $A/(1-\rho^{2J})$ decays rapidly, but the structural bias term $\alpha B(1-\rho^{2J})$ grows right alongside it. In highly heterogeneous environments where $\Omega^{2}_{*} \gg 0$, increasing the gossip depth helps at first by crushing gradient noise. However, performance soon collapses once OCB takes over the error floor. The opposite happens in strictly homogeneous settings where $\Omega_{*}^{2} \to 0$. Here, the bias penalty disappears entirely, meaning the network always benefits from a larger $J$.

\begin{Theorem}[Optimal Aggregation Depth $J^{*}$]\label{Theorem: Jstar}
Let us relax the steady-state objective in Eq.~\eqref{eq:BoundJ_abstract} to the continuous domain. If structural heterogeneity is strong enough to satisfy the interior condition $\alpha B > A$, the unique global minimizer is given by
\begin{equation}\label{eq:Jstar}
    J^{*} = \frac{1}{-2\ln\rho}\ln\left(\frac{1}{1-\sqrt{\frac{A}{\alpha B}}}\right).
\end{equation}
Otherwise, if the network is highly homogeneous such that $\alpha B \le A$, the function $\mathrm{Bound}(J)$ decreases monotonically. In this scenario, the theoretical optimal aggregation depth has no upper bound.
\end{Theorem}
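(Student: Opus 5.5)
The plan is to reduce the problem to a one-dimensional convex minimization through a monotone change of variables. We assume $0<\rho<1$, so that $\ln\rho<0$ (the degenerate case $\rho=0$ makes $\mathrm{Bound}(J)$ constant in $J$ and is excluded). We also use $A>0$ and $B\ge 0$, which holds because Theorem~\ref{Theorem: linear} guarantees that $C_{\Gamma}$ and $C_{\Omega}$ are strictly positive.

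First I will introduce the effective mixing gap $x \triangleq 1-\rho^{2J}$. On the relaxed domain $J\in(0,\infty)$, the map $J\mapsto x$ is continuous and strictly increasing. It is a bijection onto $(0,1)$, with inverse $J = \ln(1-x)/(2\ln\rho)$. Minimizing $\mathrm{Bound}(J)$ over $J>0$ is therefore equivalent to minimizing
\begin{equation*}
f(x) \triangleq \frac{A}{x} + \alpha B x, \qquad x\in(0,1),
\end{equation*}
and minimizers correspond one-to-one through this inverse.

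Next I analyze $f$. Its derivatives are $f'(x) = -A/x^{2} + \alpha B$ and $f''(x) = 2A/x^{3} > 0$, so $f$ is strictly convex on $(0,\infty)$. The unique stationary point on $(0,\infty)$ is $x^{*}=\sqrt{A/(\alpha B)}$.

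In the case $\alpha B > A$ we have $x^{*}\in(0,1)$. By strict convexity, $x^{*}$ is the unique global minimizer of $f$ on $(0,1)$. Mapping back gives
\begin{equation*}
J^{*} = \frac{\ln(1-x^{*})}{2\ln\rho} = \frac{1}{-2\ln\rho}\ln\left(\frac{1}{1-\sqrt{A/(\alpha B)}}\right).
\end{equation*}
This is positive and finite, because $1-x^{*}\in(0,1)$ and $\ln\rho<0$. Uniqueness in $J$ follows from the bijectivity of the change of variables.

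In the case $\alpha B \le A$, for every $x\in(0,1)$ we have $f'(x) = (\alpha B x^{2} - A)/x^{2} < (\alpha B - A)/x^{2} \le 0$. Hence $f$ is strictly decreasing on $(0,1)$. Composing with the increasing map $J\mapsto x$ shows that $\mathrm{Bound}(J)$ is strictly decreasing in $J$. Its infimum $A+\alpha B$ is approached only as $J\to\infty$ and is never attained, so no finite optimal depth exists. This covers the subcase $B=0$ as well, since the strict inequality $\alpha B x^{2} < \alpha B$ is then replaced by equality and the bound is $f'(x) = -A/x^{2} < 0$ directly.

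I do not expect a real obstacle, since every step is elementary calculus. The only points needing care are the sign bookkeeping in the inversion ($\ln\rho<0$ and $\ln(1-x^{*})<0$, so $J^{*}>0$) and the boundary case $\alpha B=A$. In that case $x^{*}=1$ corresponds to $J=\infty$, which is why it belongs to the monotone regime rather than the interior one.
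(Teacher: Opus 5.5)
Your proposal is correct and follows the paper's proof: the same substitution $y=1-\rho^{2J}\in(0,1)$, strict convexity of $A/y+\alpha B y$, the stationary point $y^{*}=\sqrt{A/(\alpha B)}$ being interior exactly when $\alpha B>A$, and inversion to obtain $J^{*}$. Your proof also treats the regime $\alpha B\le A$ explicitly, including the subcase $B=0$, by showing $f'<0$ on $(0,1)$, and it carries out the bijectivity and sign checks; the paper's proof leaves all of these implicit.
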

\begin{IEEEproof}
See Appendix~\ref{Proof: Theorem Jstar}.
\end{IEEEproof}

Theorem~\ref{Theorem: Jstar} provides a direct configuration rule for system deployment. When $\alpha B > A$, Eq.~\eqref{eq:Jstar} pinpoints the exact boundary where variance reduction perfectly balances structural bias. Because practical communication depths require integer values, the actual deployed $J$ is obtained by testing the two nearest integers to the continuous root $J^{*}$. The final value is strictly capped by the hardware threshold $J_{\max}$. This explicit configuration maps the theoretical minimum directly to real-world edge bandwidth constraints.

\section{Experiments}
This section evaluates the similarity-aware aggregation protocol against standard peer-to-peer baselines. The primary goal is to verify the multi-task performance gains within heavily personalized client networks. We sweep the aggregation depth $J$ to observe the convergence behavior. These empirical observations directly validate the theoretical optimal mixing depth derived in Section~\ref{subsec:optJ}. Finally, we examine whether the observed gains remain stable under network-size variation and how imperfect wireless links affect packet loss and communication latency.

\subsection{Experimental Setup}
\textbf{Datasets and task settings.}
We conduct experiments on both NYU-v2 and Taskonomy. The NYU-v2 benchmark is used as the main testbed, where we evaluate the effectiveness of the proposed similarity-aware aggregation, examine the convergence behavior, and study the influence of the aggregation depth $J$. NYU-v2 contains 795 training and 654 test images with dense annotations for semantic segmentation (Seg), depth estimation (Dep), and surface normal prediction (Sn). Unless otherwise specified, the simulated DSC network on NYU-v2 consists of $N=10$ clients.

To examine whether the observed gains remain stable beyond this setting, we also conduct supplementary experiments on Taskonomy. This dataset introduces a different multi-task composition, including Seg, Sn, keypoint detection (KP), and edge detection (Edge). In the Taskonomy experiments, we vary the number of clients over $N\in [2,30]$. 

\textbf{Communication setup.}
We construct the decentralized topology from randomly generated client locations in a two-dimensional area. Rather than assuming a fully connected mesh, we use a random geometric graph to reflect distance-constrained peer-to-peer communication. Two clients are connected when their distance falls within the communication radius. For each topology realization, the radius is gradually increased until the graph becomes connected, and the resulting adjacency defines the one-hop neighbor set $\mathcal{N}_i$ used for decentralized aggregation.

For wireless access, we adopt the TDMA-based coloring schedule described in the system model. Feasible one-hop links are assigned to time slots such that conflicting transmissions are separated. This setting provides a controlled abstraction of local wireless contention, while keeping the comparison focused on how different aggregation rules exploit the same neighborhood structure.

We further incorporate packet-level reliability into the communication evaluation. For communication-volume accounting, both the shared encoder parameters and the routing-policy entries are represented using 32-bit floating-point values. Each transmitted message is divided into fixed-length packets, and the packet number on each link is computed from the message size. The packet loss probability of a valid one-hop link depends on the nominal regional SNR, the inter-client distance, and the packet length. For latency evaluation, links assigned to the same TDMA color slot are treated as parallel transmissions, and the slot duration is determined by the maximum link latency among the active links in that slot. The latency of one aggregation step is obtained by summing the durations of all color slots, while the total communication latency is accumulated over the required communication rounds and aggregation steps. In the communication-quality experiments, both packet loss probability and communication latency are averaged over random topology realizations.

\textbf{Implementation Details.}
The local models utilize a DeepLab-ResNet-34 backbone. For a given client $i$, the backbone splits into $|\mathcal{T}_{i}|$ task-specific decoders. We train the network weights using standard SGD and update the routing policy parameters via the Adam optimizer. Both optimizers are initialized with a learning rate of $1 \times 10^{-4}$. The training process spans $R=60$ global communication rounds. During each round, clients complete $E=10$ local epochs before triggering the gossip aggregation. The aggregation depth $J$ is evaluated over the interval $[1,15]$.

\textbf{Evaluation Metrics.}
We track the global loss convergence alongside the final task-specific performance. Each downstream task relies on its standard evaluation metrics. Seg is evaluated via mean intersection over union (mIoU) and pixel accuracy (PAcc). Sn performance is measured using mean and median angular errors, plus the accuracy thresholds at $11.25^{\circ}$, $22.5^{\circ}$, and $30^{\circ}$. For Dep, we calculate absolute (Abs) and relative (Rel) errors, along with the threshold percentage $\delta \in \{1.25, 1.25^{2}, 1.25^{3}\}$. For KP and Edge, we calculate the $L_1$ loss, where lower values indicate better performance.

We establish a strict \textit{no-aggregation} baseline to isolate the collaborative gains. Let $b_{t,q}$ denote this baseline for metric $q$ on task $t$. If client $i$ achieves an experimental measurement $m_{i,t,q}$, its relative improvement is calculated as $\Delta_{i,t,q} = (m_{i,t,q} - b_{t,q})/b_{t,q} \times 100\%$. The evaluation metrics scale in opposite directions; higher is better for accuracy, while lower is better for errors. A directional multiplier $I_{t,q} \in \{-1, 1\}$ is therefore introduced to unify the metric orientation. The final overall improvement is computed through the following hierarchical aggregation:
\begin{align}\label{eq:delta_hierarchy}
   & \Delta_{i,t} = \frac{1}{|Q_{t}|} \sum_{q \in Q_{t}} I_{t,q} \Delta_{i,t,q}, \\& \Delta_{t}= \frac{1}{|\mathcal{N}_{t}|} \sum_{i \in \mathcal{N}_{t}} \Delta_{i,t}, \\
   & \Delta_{\mathrm{all}} = \textstyle \frac{1}{|\mathcal{T}|} \sum_{t \in \mathcal{T}} \Delta_{t},   
\end{align}
Here, $Q_t$ is the specific metric set for task $t$. The subset of clients assigned to task $t$ is denoted by $\mathcal{N}_t$, and $\mathcal{T}$ represents the comprehensive set of all tasks. The relative improvement scores also reflect the effect of negative transfer during aggregation. Since the non-aggregation model is used as the no-transfer reference, $\Delta_{i,t}<0$ indicates that aggregation degrades task $t$ on client $i$. Here, $\Delta_t$ measures the average transfer effect for a specific task; $\Delta_{\rm all}$ reports the network-level average across all tasks. A positive $\Delta_{\rm all}$ means that aggregation is beneficial on average, although some individual clients or tasks may suffer from negative transfer.

\subsection{Results and Analysis}
Most existing PFL frameworks assume single-task setups or rely on central orchestration. They fail in decentralized MT-SC environments. We benchmark our framework against four standard peer-to-peer alternatives to highlight the necessity of task-aware routing:

\begin{enumerate}
    \item \textbf{No-aggregation:} The strict performance lower bound. Clients train in total isolation.
    \item \textbf{Decentralized FedAvg}~\cite{8950073}\textbf{:} A topology-agnostic baseline that weights all neighbors uniformly, ignoring task differences entirely.
    \item \textbf{Heuristic \textit{max} aggregation:}  A naive task-aware approach. Rather than evaluating the full portfolio, it defines peer weights using only the single highest task similarity found between two clients.
    \item \textbf{FedAMP~\cite{Huang2021Personalized}:} A PFL method based on attentive message passing. It measures the similarity between client models and gives higher weights to more related clients during aggregation.
\end{enumerate}

\begin{table*}[h]\footnotesize
\centering
\caption{Comparison of Different Aggregation Methods.}
\label{tab:multi-task-performance-pro}
\begin{tabular}{llllllllllllllllll }
\toprule
\multirow{2}{*}{Method} & 
\multicolumn{3}{c}{\textbf{Seg}} & 
\multicolumn{6}{c}{\textbf{Sn}} & 
\multicolumn{6}{c}{\textbf{Dep}} & 
\multirow{2}{*}{$\Delta_{\mathrm{all}}$ (\%)} \\
\cmidrule(lr){2-4} \cmidrule(lr){5-10} \cmidrule(lr){11-16}
& 
{mIoU} & {PAcc} & {$\Delta_{\mathrm{Seg}}$} & 
{Mean $\downarrow$} & {Median $\downarrow$} & {$11.25^\circ$} & {$22.5^\circ$} & {$30^\circ$} & {$\Delta_{\mathrm{Sn}}$} & 
{Abs$\downarrow$} & {Rel$\downarrow$} & {1.25} & {1.25$^2$} & {1.25$^3$} & {$\Delta_{\mathrm{Dep}}$} & \\
\midrule
No aggregation   & 28.46 & 59.21 & - & 16.54 & 13.05 & 44.11 & 73.12 & 83.14 & - & 0.565 & 0.223 & 62.38 & 88.72 & 96.71  & -    & -  \\
FedAvg($J$=1)      & 28.89 & 59.72 & 1.19 & 16.29 & 12.83 & 44.80 & 73.62 & 83.66 & 1.21 & 0.557 & 0.219 & 63.30 & 89.16 & 97.14 & 1.12  & 1.17  \\
max($J$=1)         & 29.31 & 60.31 & 2.42 & 16.30 & 12.66 & 45.36 & 73.78 & 84.60 & 1.99 & 0.570 & 0.220 & 61.44 & 88.81 & 97.06 & $-$0.12 & 1.43  \\
 FedAMP($J$=1)      & 29.14 & 59.77 & 1.67 & 16.23 & 12.74 & 45.16 & 73.80 & 84.67 & 1.88 & 0.553 & 0.215 & 63.42 & 89.48 & 97.32 & 1.77 & 1.77   \\
Similarity($J$=1)  & 29.26 & 59.92 & 2.01 & 16.22 & 12.67 & 45.33 & 73.89 & 84.84 & 2.14 & 0.551 & 0.212 & 63.49 & 89.67 & 97.37 & 2.19 & 2.11 \\
FedAvg($J$=5)      & 29.37 & 60.87 & 3.00 & 16.18 & 12.68 & 45.25 & 74.50 & 84.85 & 2.31 & 0.551 & 0.217 & 63.94 & 90.22 & 97.60 & 2.06 & 2.46 \\
max($J$=5)      & \textbf{31.87} & \textbf{64.24} & \textbf{10.24} & 16.16 & 12.77 & 45.06 & 74.02 & 84.90 & 1.99 & 0.572 & 0.226 & 61.63 & 87.66 & 95.55 & $-1.24$ & 3.66 \\
 FedAMP($J$=5)     & 31.21 & 61.74 & 6.97 & 16.18 & 12.57 & 45.48 & 74.34 & 84.91 & 2.55 & 0.546 & 0.213 & 63.82 & 90.26 & 97.63 & 2.57 & 4.03   \\
Similarity($J$=5)  & 31.43 & 62.55 & 8.04 & \textbf{16.15} & \textbf{12.51} & \textbf{45.85} & \textbf{74.69} & \textbf{84.93} & \textbf{2.95} & \textbf{0.541} & \textbf{0.209} & \textbf{64.26} & \textbf{90.49} & \textbf{97.68} & \textbf{3.31} & \textbf{4.77} \\
\bottomrule
\end{tabular}
\end{table*}

\begin{figure}[t]
    \centering
    \begin{subfigure}[b]{0.4\linewidth}
        \centering
        \includegraphics[width=\linewidth]{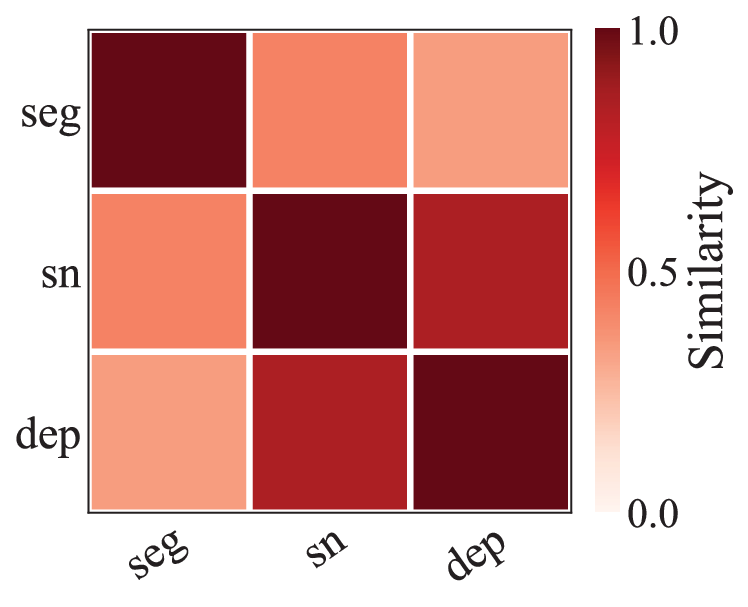}
        \caption{Single task similarity.}
        \label{fig:task_similarity_heatmap}
    \end{subfigure}
    \hfill
    \begin{subfigure}[b]{0.48\linewidth}
        \centering
        \includegraphics[width=\linewidth]{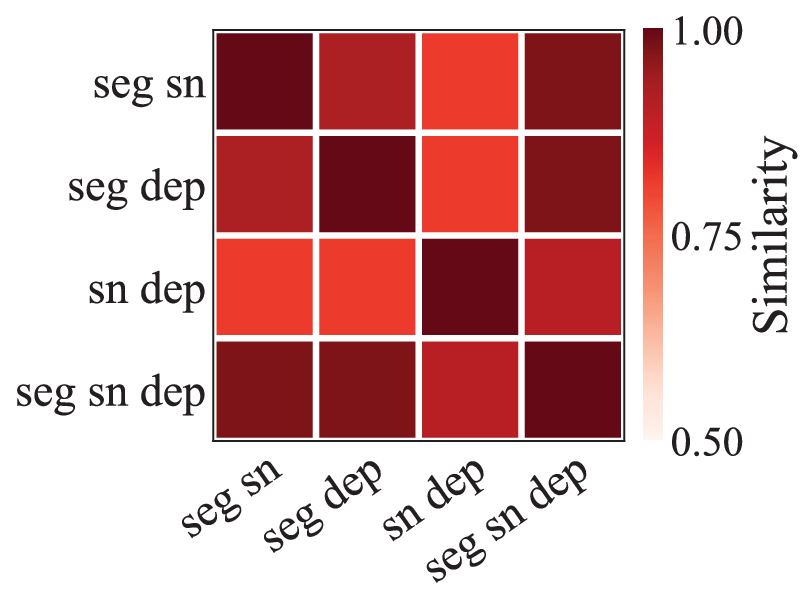}
        \caption{Task combination similarity.}
        \label{fig:task_combo_similarity_heatmap}
    \end{subfigure}
    \caption{Task Correlation Heatmap.}
    \label{fig:task_correlation_heatmap}
\end{figure}

Table~\ref{tab:multi-task-performance-pro} reports the relative gains against the isolated baseline for both shallow ($J=1$) and deep ($J=5$) mixing configurations. To contextualize these results, we must reference the task affinities in Fig.~\ref{fig:task_correlation_heatmap}. The matrix confirms a strong semantic correlation linking Sn with Dep. The stronger correlation between Sn and Dep is consistent with prior visual task-transfer studies~\cite{Zamir}. Seg sits apart, showing weak affinity to both. This specific structural asymmetry dictates the severity of the OCB penalty. As the network mixes deeper, aggregation rules that do not explicitly model task-path compatibility are more likely to introduce negative transfer. Our similarity-aware design inherently blocks this, distributing collaborative gains evenly without triggering sudden performance collapses.

\textbf{Shallow Aggregation.}
Under limited mixing, Decentralized FedAvg delivers only a marginal overall improvement of $1.17\%$. Uniform neighbor weighting indiscriminately injects mismatched semantic features, which severely dilutes the potential collaborative gain. The greedy max strategy improves Seg by $2.42\%$ and Sn by $1.99\%$, but does so at the direct expense of Dep, whose performance drops by $0.12\%$. This result confirms that Dep is particularly sensitive to representation drift. Absorbing parameters from an uncorrelated task such as Seg actively degrades its geometric structure. FedAMP improves the overall gain to $1.77\%$, showing that model-level attentive weighting can reduce part of the mismatch among clients. Its gain is still lower than that of our method, since the attention weights are not explicitly tied to task-path compatibility. In contrast, our proposed method avoids this failure mode and achieves a more balanced overall improvement of $2.11\%$.

\textbf{Deep Aggregation and Cross-Task Interference.}
Increasing the communication depth to $J=5$ amplifies both the variance reduction effect and the underlying structural bias. FedAvg reaches only a modest overall gain of $2.46\%$. Meanwhile, the max strategy induces severe cross-task interference: it pushes Seg to a peak improvement of $10.24\%$, but completely destabilizes Dep, whose performance falls by $1.24\%$. This sharp degradation highlights a critical flaw: propagating incompatible Seg features into Dep models over multiple communication hops destroys their geometric consistency. FedAMP outperforms these two baselines, reaching a $4.03\%$ overall improvement. Its attentive weighting gives related clients more influence and therefore avoids the sharp Dep degradation observed in the max strategy. Nevertheless, the improvement remains below that of the proposed method, especially on Sn and Dep. Our routing scheme instead exploits the known Sn--Dep synergy to filter out these harmful updates. As a result, both tasks remain protected from negative transfer, with Sn improving by $2.95\%$ and Dep by $3.31\%$. Although the Seg gain of our method, $ 8.04\%$, is slightly below that of the greedy baseline, it avoids the over-consensus trap and creates a substantially more stable collaborative environment, ultimately achieving the highest global improvement of $4.77\%$.

\textbf{Convergence Behavior Under Aggregation.}
Fig.~\ref{fig:convergence_all} tracks the training metrics over 60 communication rounds under a deep mixing regime. The primary goal is to confirm optimization stability despite extreme task heterogeneity across the mesh network.

\begin{figure*}[t]
    \centering
    \begin{subfigure}[t]{0.33\linewidth}
        \centering
        \includegraphics[width=\linewidth]{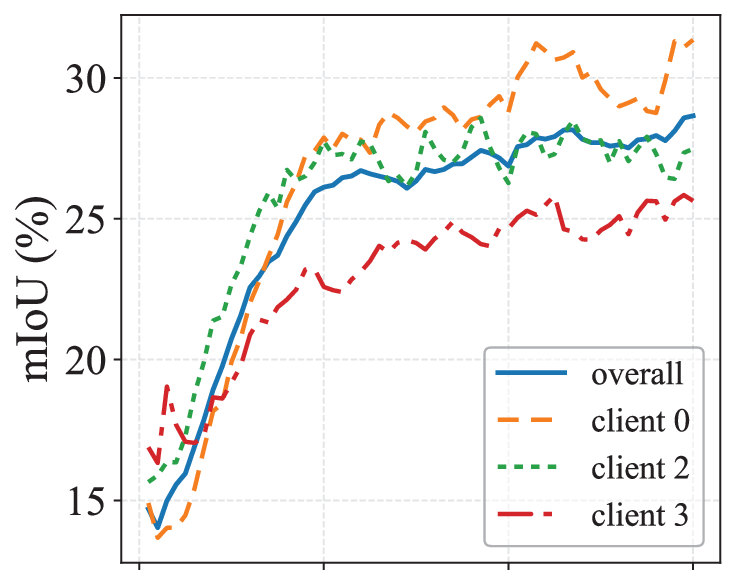}
        \caption{mIoU vs.\ Round (Seg)}
        \label{fig:miou_vs_round}
    \end{subfigure}\hfill
    \begin{subfigure}[t]{0.33\linewidth}
        \centering
        \includegraphics[width=\linewidth]{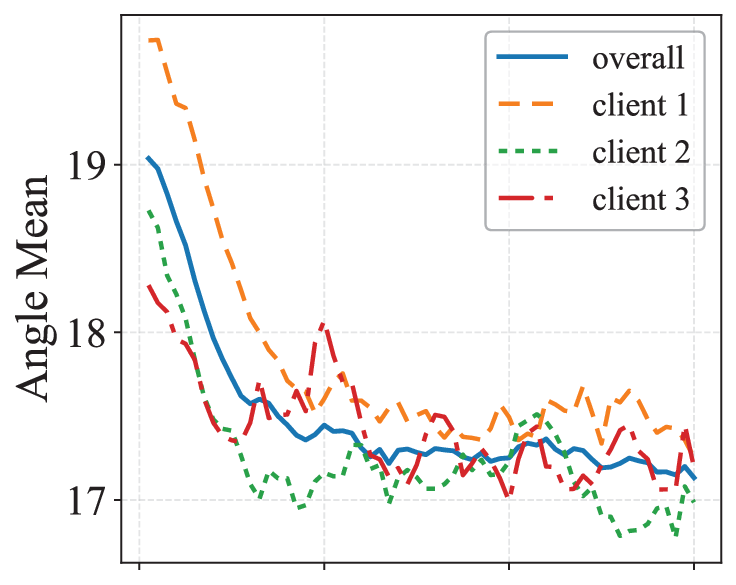}
        \caption{Mean vs.\ Round (Sn)}
        \label{fig:mean_vs_round}
    \end{subfigure}\hfill
    \begin{subfigure}[t]{0.33\linewidth}
        \centering
        \includegraphics[width=\linewidth]{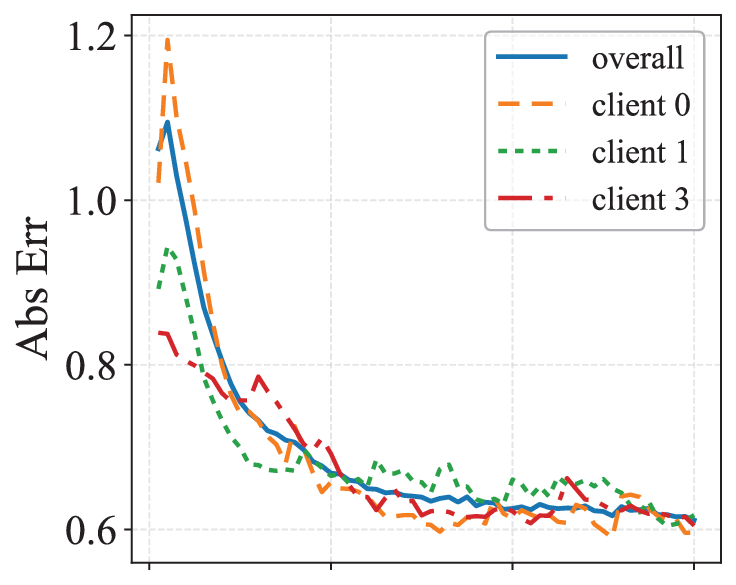}
        \caption{Abs vs.\ Round (Dep)}
        \label{fig:abs_vs_round}
    \end{subfigure}

    \vspace{0cm} 

    \begin{subfigure}[t]{0.33\linewidth}
        \centering
        \includegraphics[width=\linewidth]{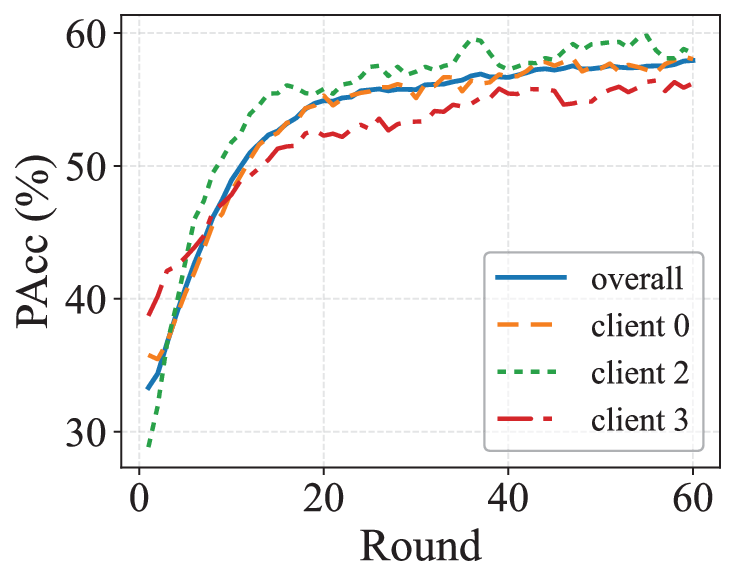}
        \caption{PAcc vs.\ Round (Seg)}
        \label{fig:pacc_vs_round}
    \end{subfigure}\hfill
    \begin{subfigure}[t]{0.33\linewidth}
        \centering
        \includegraphics[width=\linewidth]{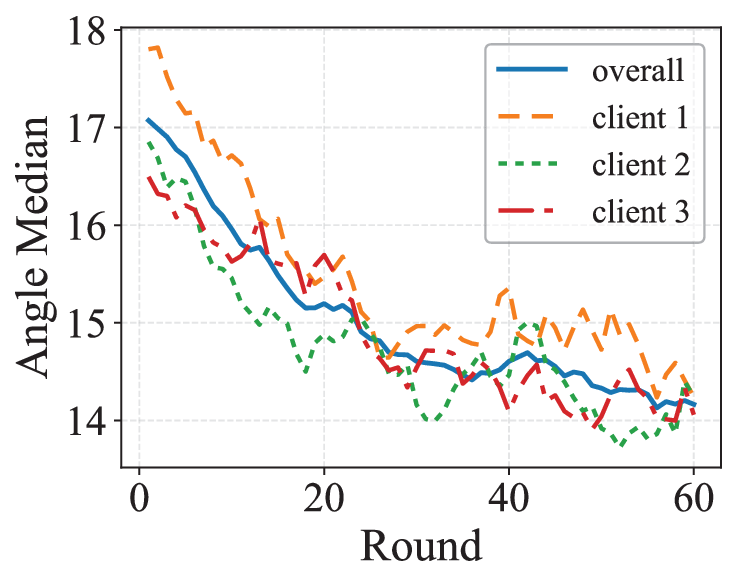}
        \caption{Median vs.\ Round (Sn)}
        \label{fig:median_vs_round}
    \end{subfigure}\hfill
    \begin{subfigure}[t]{0.33\linewidth}
        \centering
        \includegraphics[width=\linewidth]{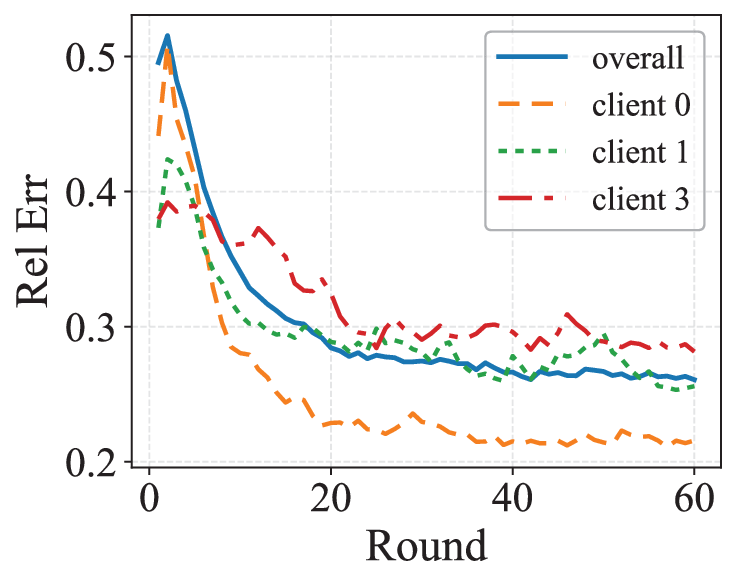}
        \caption{Rel vs.\ Round (Dep)}
        \label{fig:rel_vs_round}
    \end{subfigure}

    \vspace{-0.1cm} 
    \caption{Convergence behavior of different tasks (Seg, Sn, Dep) over communication rounds.}
    \label{fig:convergence_all}
\end{figure*}

\begin{figure}[t] 
    \centering
    \begin{subfigure}[t]{0.48\columnwidth} 
        \centering
        \includegraphics[width=\linewidth]{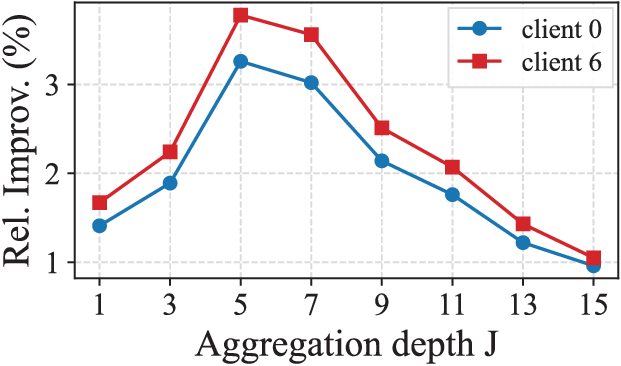}
        \caption{seg dep group} 
        \label{fig:rel-imp-seg-dep}
    \end{subfigure}
    \hfill
    \begin{subfigure}[t]{0.48\columnwidth}
        \centering
        \includegraphics[width=\linewidth]{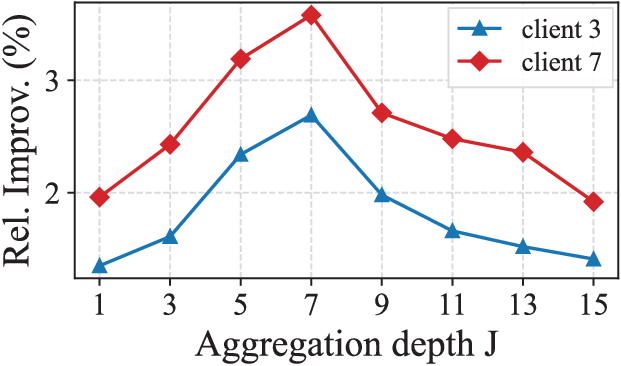}
        \caption{seg sn dep group}
        \label{fig:rel-imp-seg-sn-dep}
    \end{subfigure}

    \vspace{0.6em}
    \begin{subfigure}[t]{0.48\columnwidth}
        \centering
        \includegraphics[width=\linewidth]{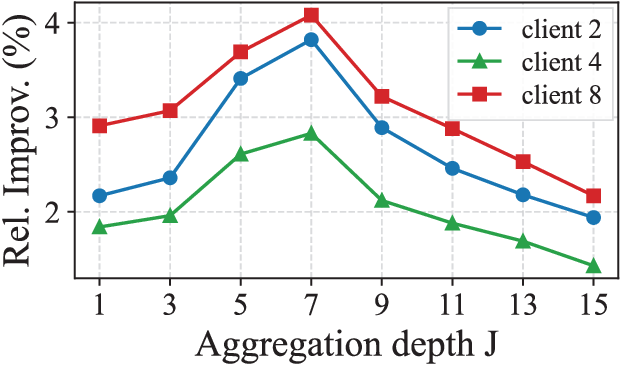}
        \caption{seg sn group}
        \label{fig:rel-imp-seg-sn}
    \end{subfigure}
    \hfill
    \begin{subfigure}[t]{0.48\columnwidth}
        \centering
        \includegraphics[width=\linewidth]{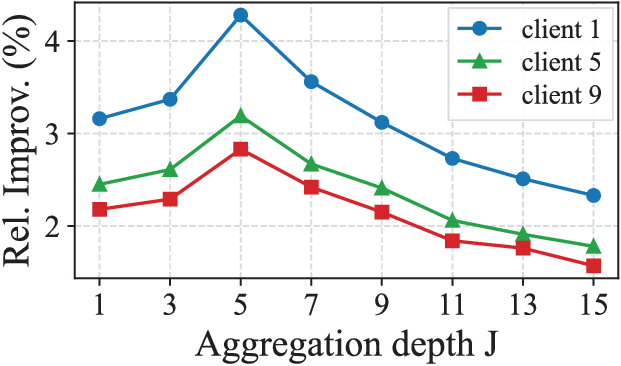}
        \caption{sn dep group}
        \label{fig:rel-imp-sn-dep}
    \end{subfigure}

    \caption{Relative improvement versus aggregation depth $J$ for different task groups.}
    \label{fig:rel-imp-all}
\end{figure}

\begin{figure}[t]
    \centering
\includegraphics[width=\linewidth]{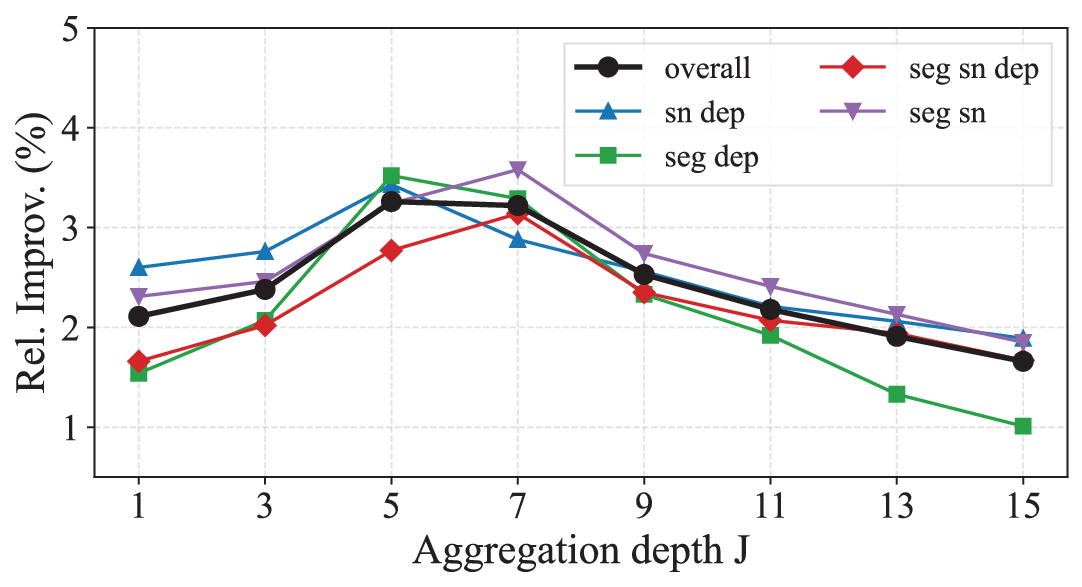}
    \caption{Overall relative improvement versus aggregation depth $J$.}
    \label{fig:overall_relative_improvement}
\end{figure}

\textbf{Empirical Validation of the $J$-dependent Trade-off.}
Theorem~\ref{Theorem: Jstar} predicts a U-shaped relationship between performance and aggregation depth. To examine this analytical prediction, Fig.~\ref{fig:rel-imp-all} plots the relative improvement versus $J$ across four different task portfolios.

A clear non-monotonic trend is observed across all four subplots. The relative improvement initially increases with $J$, reaches an empirical optimum, and then declines. At small $J$, communication is limited, and the network operates in a variance-dominated regime in which local updates cannot absorb sufficient external knowledge. Increasing the propagation depth helps reduce this variance by enabling broader neighborhood information exchange. However, overly deep aggregation pushes the system into an OCB-dominated regime, where accumulated heterogeneous parameters begin to act as structural interference. This empirical transition closely aligns with the behavior predicted by Eq.~\eqref{eq:BoundJ_abstract}.

The location of the optimal depth $J^*$ is highly sensitive to the underlying task composition. This observation is consistent with the structural heterogeneity term $\Omega_*^2$ in the derived upper bound. For the high-affinity ``sn dep'' portfolio (Fig.~\ref{fig:rel-imp-sn-dep}), collaborative gains appear immediately at $J=1$, and the post-peak degradation remains relatively mild, indicating a stronger tolerance to deeper parameter mixing. By contrast, introducing tasks with lower affinity or higher dimensionality, such as ``seg sn dep'' or ``seg sn'', substantially changes this pattern. The optimal peak shifts leftward, and the subsequent performance drop becomes much sharper. The task composition therefore limits the effective communication depth by determining the point at which heterogeneity-induced interference outweighs the marginal benefit of additional neighborhood averaging.

Client-level results further show that local dataset size affects the maximum achievable collaborative gain. Within the ``sn dep'' group, the client with the richest local data consistently attains the largest peak improvement. A larger local dataset helps anchor the optimization trajectory and makes the model more robust to heterogeneous external updates. In contrast, data-scarce clients obtain smaller peak gains and exhibit faster performance degradation as $J$ increases, indicating greater vulnerability to externally induced OCB.

\textbf{Impact of Network Size.}
We next use the Taskonomy setting to examine whether the average learning gain changes as the decentralized network becomes larger. As shown in Fig.~\ref{fig:relative_improvement_N}, the proposed method achieves the highest $\Delta_{\mathrm{all}}$ under all tested network sizes. When $N$ increases from 10 to 20, the curves of all methods remain relatively flat. Throughout this range, the changes in $\Delta_{\mathrm{all}}$ are $0.09\%$, $0.10\%$, $0.13\%$, and $0.14\%$ for FedAvg, \textit{max}, FedAMP, and the proposed similarity-aware aggregation, respectively. This indicates that, within the examined range, enlarging the decentralized network does not substantially change the average learning gain. In this case, increasing $N$ expands the global network scale, while the effective aggregation behavior is still governed by local neighborhood composition and the way neighbor updates are weighted. This variation is partly attributable to the proposed policy-driven multi-path encoder, which preserves task-dependent execution paths while allowing shared encoder blocks to learn reusable representations across tasks. Under this architecture, exchanging encoder parameters can bring collaborative gains to all aggregation rules, while the proposed similarity-aware aggregation benefits more by assigning larger weights to neighbors with more compatible routing-policy patterns.

These results suggest that the proposed method is not sensitive to network-size variation. Since the average local communication density is fixed, the main performance difference comes from whether the aggregation rule can select and weight compatible neighbor updates, rather than from the number of clients. This observation is consistent with the NYU-v2 results, where explicitly modeling task compatibility leads to stronger overall performance under heterogeneous task portfolios.

\begin{figure}[t]
\centering
\includegraphics[width=\linewidth]{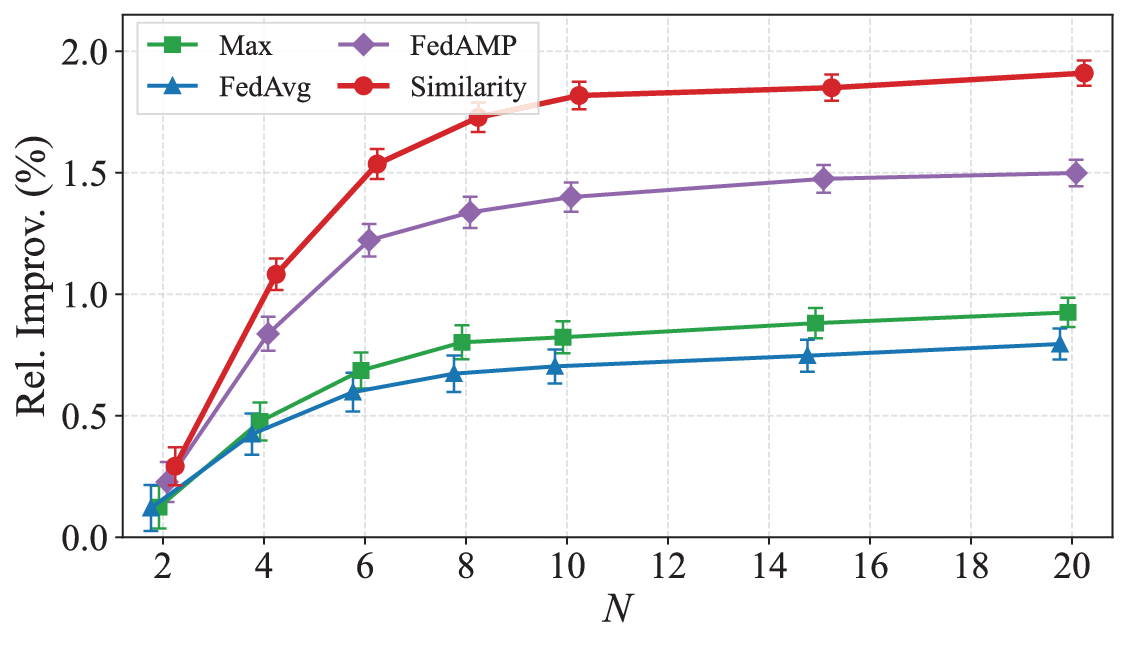}
\caption{ Overall relative improvement with varying network size. The curves show the mean over task-assignment realizations, and the error bars indicate the standard deviation.} 
\label{fig:relative_improvement_N}
\end{figure}

\textbf{Communication Cost and Impact of Regional Communication Quality.}
Using ResNet-18 as the shared backbone, FedAvg and FedAMP transmit approximately $44.71$ MB of model parameters in each client-to-neighbor exchange. The heuristic \textit{max} and the proposed similarity-aware method additionally transmit the routing-policy vectors, while their total communication volume remains approximately $44.71$ MB. For a client assigned with three tasks, these policy vectors introduce only $96$ bytes of additional payload, resulting in a marginal communication-latency overhead. Thus, the proposed method achieves improved task performance under essentially the same communication budget, with the gain primarily arising from the designed aggregation mechanism rather than a larger transmitted model.

We next examine the effect of regional channel quality on packet loss and communication latency. The nominal regional SNR is varied to control the area-level channel condition, while different network sizes lead to different client densities and one-hop link distances. We set $R=30$ and $J=1$, and count one packet transmission as one packet-time unit.

As shown in Fig.~\ref{fig:regional_snr_latency}(a), the average packet loss probability decreases as the regional SNR increases. The difference among network sizes mainly comes from the generated spatial topology. In a denser network, many one-hop links become shorter, so the packet loss probability can drop more quickly when the channel condition improves.

Fig.~\ref{fig:regional_snr_latency}(b) shows how this link reliability affects communication latency. When the regional SNR is low, retransmission dominates the latency, and larger backbones suffer more because each model exchange contains more packets. As the SNR reaches a moderate range, denser networks may benefit from shorter and more reliable links, which can offset part of the additional scheduling cost. In the high-SNR region, packet loss is already low for all network sizes, and the remaining latency difference is mainly determined by the number of scheduled links and the model size.

\begin{figure}[t]
    \centering
    \subfloat[Average packet loss ratio.\label{fig:packet_loss}]{
        \includegraphics[width=0.48\linewidth]{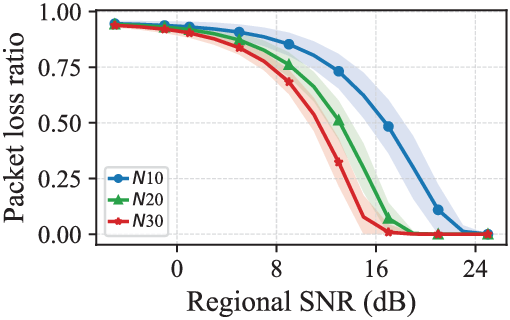}
    }
    \subfloat[Total communication latency.\label{fig:communication_latency}]{
        \includegraphics[width=0.48\linewidth]{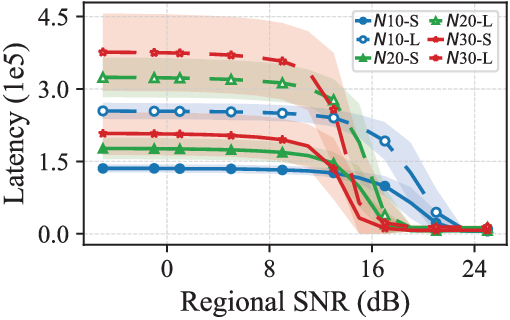}
    }
    \caption{
Effect of regional communication quality on packet loss probability and communication latency. Here, $N10$, $ N20$, and $ N30$ denote networks with $10$, $20$, and $30$ clients, respectively. $\mathrm{S}$ and $\mathrm{L}$ denote the small and large backbone settings, corresponding to ResNet-18 and ResNet-34.
The curves show the mean over random topology realizations, and the shaded regions indicate the 10th--90th percentile range.}

    \label{fig:regional_snr_latency}
\end{figure}

\section{Conclusion}
This paper investigated how structural heterogeneity affects decentralized multi-task optimization in DFL systems. Our analysis revealed a U-shaped trade-off in the steady-state error: increasing the aggregation depth reduces topology-dependent variance, but also amplifies OCB. To address this issue, we proposed a similarity-aware routing strategy that selectively filters structurally mismatched updates, enabling more reliable and effective decentralized collaboration.

Experiments on the NYU-v2 dataset support the theoretical analysis. When operated at the analytically derived optimal depth, the proposed framework mitigates negative transfer and consistently outperforms decentralized FedAvg, FedAMP and heuristic baselines, achieving a $4.77\%$ global relative improvement. Results on Taskonomy further show that the proposed aggregation strategy is only mildly affected by moderate changes in network size.

Future work will extend the framework to more realistic edge settings, including time-varying wireless channels, dynamic mesh topologies, and adaptive online schemes for aggregation depth control.

\appendix
\subsection{Proof of Lemma~\ref{lemma: grad_and_disagree_bound}}\label{Proof: grad_and_disagree_bound}
In this proof, we derive an explicit cross-round recursion for the consensus error $E_{\mathrm{cons},r+1,J}$ in terms of $E_{\mathrm{cons},r,J}$ by combining a local-training disagreement expansion bound and a local-aggregation disagreement bound.

\paragraph{Local Training}
Let $\theta_{i,r,e}$ be the local model of client $i$ at step $e$ of round $r$. We define the \emph{pre-aggregation} consensus error at the end of local training as
\begin{equation}\label{eq:app_Econs_r0_def_recall}
    E_{\mathrm{cons},r,0} \triangleq \frac{1}{N}\sum_{i\in\mathcal N}\big\|\theta_{i,r,E}-\bar{\theta}_{r,E}\big\|^{2},
\end{equation}
where $\bar{\theta}_{r,E} \triangleq \frac{1}{N}\sum_{i\in\mathcal{N}} \theta_{i,r,E}$. Note that at the start of the round, $e=0$, the error is $E_{\mathrm{cons},r-1,J}$.

We first bound the expected drift of the local updates. The update rule is $\theta_{i,r,e+1} = \theta_{i,r,e} - \eta g_{i,r,e}$. The mean model evolves as $\bar{\theta}_{r,e+1} = \bar{\theta}_{r,e} - \eta \bar{g}_{r,e}$, where $\bar{g}_{r,e} \triangleq \frac{1}{N}\sum_i g_{i,r,e}$.
The deviation from the mean evolves as
\begin{equation}\label{eq:dev_recursion}
    \theta_{i,r,e+1} - \bar{\theta}_{r,e+1} = (\theta_{i,r,e} - \bar{\theta}_{r,e}) - \eta (g_{i,r,e} - \bar{g}_{r,e}).
\end{equation}
Squaring the norm and taking expectations yields the expansion given by 
\begin{align}\label{eq:dev_sq_expanded}
    \mathbb{E}\|\theta_{i,r,e+1} \!-\! \bar{\theta}_{r,e+1}\|^2 &= \mathbb{E}\|\theta_{i,r,e} \!\!- \!\!\bar{\theta}_{r,e}\|^2 + \eta^2 \mathbb{E}\|g_{i,r,e} \!\!-\!\! \bar{g}_{r,e}\|^2 \nonumber \\
    &\hspace{-1em} - 2\eta \mathbb{E}\langle \theta_{i,r,e} - \bar{\theta}_{r,e}, g_{i,r,e} - \bar{g}_{r,e} \rangle.
\end{align}
Using Young's inequality $\|a-b\|^2 \le (1+\beta)\|a\|^2 + (1+\frac{1}{\beta})\|b\|^2$, we simplify the analysis by bounding the expected deviation as
\begin{align}\label{eq:dev_youngs}
    \mathbb{E}\|\theta_{i,r,e+1} - \bar{\theta}_{r,e+1}\|^2 &\le (1+\eta^2 E \mathrm{L}_{\mathrm{eff}}^2)\mathbb{E}\|\theta_{i,r,e} - \bar{\theta}_{r,e}\|^2 \nonumber \\
    &\hspace{-5em} + \Big(1+\frac{1}{\eta^2 E \mathrm{L}_{\mathrm{eff}}^2}\Big)\eta^2 \mathbb{E}\|g_{i,r,e} - \bar{g}_{r,e}\|^2.
\end{align}
To handle the gradient variance term, we decompose it using the exact mean gradient $\nabla \bar{\mathcal{L}}(\theta_{r,e}) \triangleq \frac{1}{N}\sum_{j\in \mathcal{N}} \nabla \mathcal{L}_j(\theta_{j,r,e})$ expressed as
\begin{align}\label{eq:grad_decomp}
    g_{i,r,e} - \bar{g}_{r,e} &= (\nabla \mathcal{L}_i(\theta_{i,r,e}) - \nabla \bar{\mathcal{L}}(\theta_{r,e})) \nonumber \\
    & \hspace{-5em}+g_{i,r,e} - \nabla \mathcal{L}_i(\theta_{i,r,e}) - \frac{1}{N}\sum_{j\in \mathcal{N} }(g_{j,r,e} - \nabla \mathcal{L}_{j}(\theta_{j,r,e})) .
\end{align}
Applying Jensen's inequality and Assumption~\ref{assumption: unbiased gradient}, we bound the expected squared norm of the stochastic noise by $4(\sigma_i^2 + \sigma^2)$. Similarly, the heterogeneity drift is bounded by averaging over all clients $i \in \mathcal{N}$ via the same inequality,
\begin{align}
    &\frac{1}{N}\!\!\sum_{i\in\mathcal{N}}\! \|\nabla\! \mathcal{L}_i(\theta_{i,r,e}\!)\!\!-\!\! \nabla\! \bar{\mathcal{L}}(\theta_{r,e\!})\|^2 \!\!\le\!\! \frac{3}{N}\!\!\sum_{i\in\mathcal{N}}\!\!\|\nabla\! \mathcal{L}_i(\theta_{i,r,e}\!)\!\!-\!\!\! \nabla \!\mathcal{L}_i(\bar{\theta}_{r,e})\|^2 \nonumber \\&+\!\! 3\|\nabla \bar{\mathcal{L}}(\bar{\theta}_{r,e})\!-\!\! \nabla \bar{\mathcal{L}}(\theta_{r,e})\|^2\!\!  + \!\!\frac{3}{N}\sum_{i\in\mathcal{N}} \|\nabla \mathcal{L}_i(\bar{\theta}_{r,e})\!-\!\! \nabla \bar{\mathcal{L}}(\bar{\theta}_{r,e})\|^2 \nonumber \\
    &\le 6\mathrm{L}_{\mathrm{eff}}^2 E_{\mathrm{cons},r,e} + 3\widehat{\Gamma}.
\end{align}
Substituting these bounds back into \eqref{eq:dev_youngs}, recursively applying it over $e=0,\dots,E-1$, and defining appropriate bounding constants $a_1$ and $a_2$ leads to
\begin{align}\label{eq:local_expansion_final}
    E_{\mathrm{cons},r,0}\!\!\le\!\! a_2 \eta^2\! E^2\! (\widehat{\Gamma}\!\! +\!\! \sigma^2)\!\! +\!\! (1\!\!+\!\!a_1 \!\eta^2\! E^2 \mathrm{L}_{\mathrm{eff}}^2\!)\! E_{\mathrm{cons},r-1,J}.
\end{align}
This bound quantifies how statistical heterogeneity and gradient noise introduce disagreement into the system prior to the aggregation step. 

\paragraph{Local Aggregation}
Next, we determine how the disagreement contracts after $J$ aggregation steps. Let $\boldsymbol{\Psi}_{r,0} \in \mathbb{R}^{N \times M}$ be the matrix of local models before aggregation. The aggregation proceeds as $\boldsymbol{\Psi}_{r,J} = \boldsymbol{\mathrm{W}}^J \boldsymbol{\Psi}_{r,0}$. Since $\boldsymbol{\mathrm{W}}$ is strictly column-stochastic, it perfectly preserves the arithmetic mean, yielding
\begin{equation}
    \bar{\psi}_{r,J} = \frac{1}{N}\mathbf{1}^\top \boldsymbol{\Psi}_{r,J} = \frac{1}{N}\mathbf{1}^\top \boldsymbol{\mathrm{W}}^J \boldsymbol{\Psi}_{r,0} = \bar{\psi}_{r,0}.
\end{equation}
The consensus error is formulated as $E_{\mathrm{cons},r,J} = \frac{1}{N}\|(\mathbf{I} - \mathbf{M})\boldsymbol{\Psi}_{r,J}\|_F^2$, where $\mathbf{M} \triangleq \frac{1}{N}\mathbf{1}\mathbf{1}^\top$ denotes the mean-centering projection matrix. To capture the impact of dispersed client optima $\Theta^* = [\theta_1^*, \dots, \theta_N^*]^\top$, we decompose the error relative to the centered optima matrix as
\begin{align}\label{eq:gossip_decomp_structural}
    (\mathbf{I} \!\!- \!\!\mathbf{M})\boldsymbol{\Psi}_{r,J} \!\!=\!\!(\mathbf{I} \!\!-\!\! \mathbf{M})\boldsymbol{\mathrm{W}}^J \Theta^* \!\!+\!\! (\mathbf{I} \!\!-\!\! \mathbf{M})\boldsymbol{\mathrm{W}}^J \!(\boldsymbol{\Psi}_{r,0}\!\! -\!\! \Theta^*\!).
\end{align}
Applying the generalized triangle inequality $\|A+B\|_F^2 \le (1+\gamma)\|A\|_F^2 + (1+\frac{1}{\gamma})\|B\|_F^2$ for any $\gamma > 0$ yields
\begin{align}
    E_{\mathrm{cons},r,J} &\le \frac{1+\gamma}{N} \|(\mathbf{I} - \mathbf{M})\boldsymbol{\mathrm{W}}^J (\boldsymbol{\Psi}_{r,0} - \Theta^*)\|_F^2 \nonumber \\ 
    &\quad + \frac{1+1/\gamma}{N} \|(\mathbf{I} - \mathbf{M})\boldsymbol{\mathrm{W}}^J \Theta^*\|_F^2.
\end{align}
The stochastic matrix contracts the disagreement component orthogonal to the average-consensus subspace at a geometric rate $\rho$, i.e.,
\begin{equation}
    \|(\mathbf{I} - \mathbf{M})\mathbf{W}^J(\mathbf{I}-\mathbf{M}) X\|_F \le \rho^J \|(\mathbf{I} - \mathbf{M})X\|_F.
\end{equation}
Since $\mathbf{W}$ is column-stochastic but not necessarily doubly stochastic, applying $\mathbf{W}^J$ to a matrix with a nonzero mean component may additionally induce an imbalance residual $(\mathbf{I}-\mathbf{M})\mathbf{W}^J\mathbf{M}X$. In particular, when the local optima $\Theta^*$ are dispersed across clients, $\mathbf{W}^J\Theta^*$ may exhibit a persistent structural shift. While $\mathbf{W}^J$ suppresses network disagreement~\cite{hashemi2022benefits}, it may simultaneously displace the iterates away from their respective empirical minimizers. Leveraging spectral decay, this OCB floor scales with the structural dispersion $\Omega_*^2$ as
\begin{equation}\label{eq:ocb_scaling_bound_app}
    \Delta_{\mathrm{OCB}}(J) \le c_{\mathrm{OCB}}(1-\rho^{2J})\Omega_*^2.
\end{equation}
The finalized consensus error bound after $J$ mixing steps is thus given by
\begin{equation}\label{eq:gossip_bound_final}
    E_{\mathrm{cons},r,J} \le \rho^{2J} E_{\mathrm{cons},r,0} + c_{\mathrm{OCB}}(1-\rho^{2J})\Omega_*^2.
\end{equation}
This characterization reveals a dual effect: the first term suppresses initial disagreement and gradient noise, while the second introduces an irreducible bias floor dictated by the decentralized network's structural heterogeneity.

Substituting the local expansion bound \eqref{eq:local_expansion_final} into the gossip contraction bound \eqref{eq:gossip_bound_final}, we obtain
\begin{align}
    E_{\mathrm{cons},r+1,J} &\le  \rho^{2J}(1+a_1 \eta^2 E^2 \mathrm{L}_{\mathrm{eff}}^2) E_{\mathrm{cons},r,J} \nonumber \\
    &\hspace{-4em} + \rho^{2J} a_2 \eta^2 E^2 (\widehat{\Gamma} + \sigma^2) + c_{\mathrm{OCB}}(1-\rho^{2J})\Omega_*^2.
\end{align}
This explicitly separates the expansion, contraction, and bias terms, concluding the proof.
\subsection{Proof of Lemma~\ref{lemma: decrease of potential}}\label{Proof: decrease of potential}
We analyze the per-round drift $\Delta \Phi_r \triangleq \mathbb{E}[\Phi_{r+1,J} - \Phi_{r,J}]$ of the Lyapunov function $\Phi_{r,J} = \mathcal{L}(\bar{\theta}_{r,E}) + \alpha E_{\mathrm{cons},r,J}$. Applying $\mathrm{L}_{\mathrm{eff}}$-smoothness to $\mathcal{L}$, the expected change over $E$ local steps is given by
\begin{align}\label{eq:smoothness_expansion}
    \mathbb{E}[\mathcal{L}(\bar{\theta}_{r+1,E}) - \mathcal{L}(\bar{\theta}_{r,E})] &\le   \frac{\mathrm{L}_{\mathrm{eff}}}{2} \mathbb{E}\|\bar{\theta}_{r+1,E} - \bar{\theta}_{r,E}\|^2 \nonumber \\
    &\hspace{-5em} +\mathbb{E}\langle \nabla \mathcal{L}(\bar{\theta}_{r,E}), \bar{\theta}_{r+1,E} - \bar{\theta}_{r,E} \rangle.
\end{align}
Standard bounding for decentralized settings yields the descent in terms of gradient norm, variance, and model disagreement as
\begin{align}\label{eq:loss_descent_explicit}
    \mathbb{E}[\mathcal{L}(\bar{\theta}_{r+1,E}) - \mathcal{L}(\bar{\theta}_{r,E})] &\le -c_{\Phi,1} \eta E \mathbb{E}\|\nabla \mathcal{L}(\bar{\theta}_{r,E})\|^2 \nonumber \\
    &\hspace{-10em}+\!\! \frac{C_{\mathcal{L}} \eta \mathrm{L}_{\mathrm{eff}}^2}{N} \!\!\sum_{e,i}\! \mathbb{E}\|\theta_{i,r+1,e}\!\! -\!\! \bar{\theta}_{r,E}\|^2\!\!  +\!\! C_{\mathcal{L}} \eta E (\!\sigma^2\!\! +\!\! \widehat{\Gamma}),
\end{align}
where $c_{\Phi,1}, C_{\mathcal{L}} > 0$. Decoupling the summation term via the triangle inequality gives
\begin{align}\label{eq:intra_decompose}
    \mathbb{E}\|\theta_{i,r+1,e} - \bar{\theta}_{r,E}\|^2 &\le 2\mathbb{E}\|\theta_{i,r+1,0} - \bar{\theta}_{r,E}\|^2 \nonumber \\&+ 2\mathbb{E}\|\theta_{i,r+1,e} - \theta_{i,r+1,0}\|^2.
\end{align}
The first term evaluates to $2\mathbb{E}E_{\mathrm{cons},r,J}$ since $\theta_{i,r+1,0} \equiv \psi_{i,r,J}$. For the second, unfolding local updates and applying Jensen's inequality yields
\begin{equation}\label{eq:drift_jensen_app}
    \mathbb{E}\|\theta_{i,r+1,e} - \theta_{i,r+1,0}\|^2 \le \eta^2 e \sum_{k=0}^{e-1} \mathbb{E}\|g_{i,r+1,k}\|^2.
\end{equation}
Using Lemma~\ref{lemma: grad_and_disagree_bound}, the average local drift is bounded as
\begin{equation}\label{eq:drift_bound_avg}
    \frac{2}{EN} \sum_{e,i} \mathbb{E}\|\theta_{i,r+1,e} - \theta_{i,r+1,0}\|^2 \le 2 a_2 \eta^2 E^2 (\widehat{\Gamma} + \sigma^2).
\end{equation}
Combining these components strictly bounds the expected disagreement in \eqref{eq:loss_descent_explicit} by $2 \mathbb{E} E_{\mathrm{cons},r,J} + 2 a_2 \eta^2 E^2 (\widehat{\Gamma} + \sigma^2)$. Substituting this bound and the consensus recursion from Lemma~\ref{lemma: grad_and_disagree_bound} into the total drift definition yields
\begin{align}
    \mathbb{E}[\Delta \Phi_r] &= \!\mathbb{E}[\mathcal{L}(\bar{\theta}_{r+1,E})\!\! -\!\! \mathcal{L}(\bar{\theta}_{r,E})]\!\! +\!\! \alpha \mathbb{E}[E_{\mathrm{cons},r+1,J} \!\!-\!\! E_{\mathrm{cons},r,J}] \nonumber \\
    &\le -c_{\Phi,1} \eta E \mathbb{E}\|\nabla \mathcal{L}(\bar{\theta}_{r,E})\|^2 + 2 C_{\mathcal{L}} \eta E \mathrm{L}_{\mathrm{eff}}^2 \mathbb{E} E_{\mathrm{cons},r,J} \nonumber \\
    & + C_{\mathcal{L}} \eta E (\sigma^2 + \widehat{\Gamma}) + 2 C_{\mathcal{L}} a_2 \eta^3 E^3 \mathrm{L}_{\mathrm{eff}}^2 (\widehat{\Gamma} + \sigma^2) \nonumber \\
    & + \alpha \Big( \rho^{2J}(1+a_1 \eta^{2} E^{2} \mathrm{L}_{\mathrm{eff}}^{2}) - 1 \Big) \mathbb{E} E_{\mathrm{cons},r,J} \nonumber \\
    & +\! \alpha \rho^{2J} a_2 \eta^2 E^2 \!(\widehat{\Gamma}\!\! +\! \sigma^2)\! \!+\! \alpha c_{\mathrm{OCB}}(1\!-\!\rho^{2J})\Omega_{*}^{2}.
\end{align}
Defining $c_{\Phi,2} \triangleq C_{\mathcal{L}} + 2 C_{\mathcal{L}} a_2 \eta^2 E^2 \mathrm{L}_{\mathrm{eff}}^2 + \alpha \rho^{2J} a_2 \eta E$ consolidates noise and heterogeneity. To ensure monotonicity, the coefficient of $\mathbb{E} E_{\mathrm{cons},r,J}$ must be strictly negative, which requires
\begin{equation}\label{eq:alpha_range_proof_final}
    \alpha \!\Big(\!\! (1\!\!-\!\!\rho^{2J})\!\! -\!\! \rho^{2J} \!a_1 \eta^2\! E^2 \mathrm{L}_{\mathrm{eff}}^2\! \!\Big)\!\!\! -\!\! 2 C_{\mathcal{L}} \eta E \mathrm{L}_{\mathrm{eff}}^2\!\ge\! c_{\Phi,3}(1\!\!-\!\!\rho^{2J}).
\end{equation}
Defining $c_{\Phi,3} \triangleq \alpha ( 1 - \frac{\rho^{2J} a_1 \eta^2 E^2 \mathrm{L}_{\mathrm{eff}}^2}{1-\rho^{2J}} ) - \frac{2 C_{\mathcal{L}} \eta E \mathrm{L}_{\mathrm{eff}}^2}{1-\rho^{2J}}$ and choosing a sufficiently large $\alpha$ to ensure $c_{\Phi,3} \in (0,1)$, the per-round drift is effectively bounded as
\begin{align}
    \mathbb{E}[\Delta \Phi_r] &\le \!\!-c_{\Phi,1}\eta E\,\mathbb{E}\big\|\nabla\mathcal{L}(\bar{\theta}_{r,E})\big\|^{2} \!\!-\!c_{\Phi,3}(1\!-\!\rho^{2J})\,\mathbb{E}E_{\mathrm{cons},r,J} \nonumber \\
    &\quad + c_{\Phi,2}\eta E(\widehat{\Gamma} +\sigma^{2}) +\alpha c_{\mathrm{OCB}}(1-\rho^{2J})\,\Omega_{*}^{2}.
\end{align}
This completes the proof.
\subsection{Proof of Theorem~\ref{Theorem: linear}}\label{Proof: Theorem linear}
Building on Lemma~\ref{lemma: decrease of potential}, we establish linear convergence to a steady-state error floor. To explicitly characterize the trade-off, we decouple the fundamental gradient noise from the consensus-induced noise, yielding the refined expected per-round drift formulated as
\begin{align}\label{eq:app_one_step_drift_linear_new}
    \Delta \Phi_r\!&\le\!\!-c_{\Phi,1}\eta E\mathbb{E}\big\|\nabla\!\mathcal{L}\!(\bar{\theta}_{r,E})\!\big\|^{2} \!\!-\!c_{\Phi,3}(1\!-\!\rho^{2J})\mathbb{E}E_{\mathrm{cons},r,J} \nonumber\\
    &\hspace{-2em} +\! \eta^2 E (c_{\text{base}}\! +\! \alpha \rho^{2J} c_{\text{cons}}) \!(\widehat{\Gamma}\!\!+\!\sigma^{2})\! +\! \alpha c_{\mathrm{OCB}}(1\!-\!\rho^{2J})\Omega_{*}^{2},
\end{align}
where $c_{\text{base}}$ characterizes the intrinsic SGD noise, and $c_{\text{cons}}$ captures the residual disagreement noise proportional to the spectral contraction $\rho^{2J}$.

Applying the PL condition from Assumption~\ref{assumption: PL} to the mean parameter $\bar{\theta}_{r,E}$ yields
\begin{align}\label{eq:app_one_step_drift_PL_new}
    \Delta \Phi_r &\!\le\!\! -2c_{\Phi,1}\mu_{\mathrm{eff}}\eta E\mathbb{E}[\mathcal{L}(\bar{\theta}_{r,E})\!\!-\!\!\mathcal{L}^{*}] \!\!-\!\!c_{\Phi,3}(1\!\!-\!\!\rho^{2J})\,\mathbb{E}E_{\mathrm{cons},r,J} \nonumber \\
    &\hspace{-2em} + \eta^2 E (c_{\text{base}}\! +\! \alpha \rho^{2J} c_{\text{cons}}) (\widehat{\Gamma}\!\!+\!\sigma^{2})\! \! +\!\alpha c_{\mathrm{OCB}}(1\!-\!\rho^{2J})\Omega_{*}^{2}.
\end{align}

We define the linear contraction factor for the composite Lyapunov function as
\begin{equation}\label{eq:app_kappa_lin_def_new}
    \kappa_{\mathrm{lin}} \triangleq c_{\Phi,1}\mu_{\mathrm{eff}}\eta E.
\end{equation}
By enforcing the admissible condition in \eqref{eq:alpha range}, we ensure that the consensus error contracts at least as fast as the global objective, satisfying $\kappa_{\mathrm{lin}}\alpha \le c_{\Phi,3}(1-\rho^{2J})$. Furthermore, since the expected loss gap is strictly non-negative, we can safely bound $-2\kappa_{\mathrm{lin}} \le -\kappa_{\mathrm{lin}}$. Consequently, the negative descent terms in \eqref{eq:app_one_step_drift_PL_new} can be merged and bounded by the composite function as
\begin{align}\label{eq:app_absorb_to_composite_new} 
    \!\!\!-\!2\kappa_{\mathrm{lin}}\mathbb{E}[\mathcal{L}\!(\bar{\theta}_{r,E}\!)\!\!-\!\!\mathcal{L}^{*}\!]\!\! -\!\! \kappa_{\mathrm{lin}}\alpha \mathbb{E}\!E_{\mathrm{cons},r,J}\!\! \le\!\!  -\!\!\kappa_{\mathrm{lin}}\mathbb{E}[\Phi_{r,J}\!\!-\!\!\mathcal{L}^{*}\!].
\end{align}

Substituting \eqref{eq:app_absorb_to_composite_new} back into \eqref{eq:app_one_step_drift_PL_new} and rearranging the terms using $\Delta \Phi_r = \mathbb{E}[\Phi_{r+1,J}] - \mathbb{E}[\Phi_{r,J}]$, we establish the linear recurrence relation formulated as
\begin{align}\label{eq:app_linear_recursion_new}
    \mathbb{E}\Big[\Phi_{r+1,J}&-\mathcal{L}^{*}\Big] \le (1-\kappa_{\mathrm{lin}})\, \mathbb{E}\Big[\Phi_{r,J}-\mathcal{L}^{*}\Big] \nonumber\\
    &\hspace{-4em} +\!\!\eta^2 E (c_{\text{base}} \!\!+\! \alpha \rho^{2J} c_{\text{cons}}) (\widehat{\Gamma}\!\!+\!\sigma^{2})\!\!  +\!\! \alpha c_{\mathrm{OCB}}(1\!\!-\!\!\rho^{2J})\Omega_{*}^{2}.
\end{align}
Let the sum of the last two additive terms be denoted as $\text{Total Noise}$. Iterating this recursion from round $0$ to $r$ yields the standard linear convergence bound given by
\begin{equation}\label{eq:app_linear_iterated_new}
    \mathbb{E}\Big[\Phi_{r,J}-\mathcal{L}^{*}\Big] \le (1-\kappa_{\mathrm{lin}})^{r}\Big(\Phi_{0,J}-\mathcal{L}^{*}\Big) + \frac{\text{Total Noise}}{\kappa_{\mathrm{lin}}}.
\end{equation}

To explicitly characterize the trade-off induced by the gossip depth $J$, we decouple the steady-state error floor (i.e., the second term in \eqref{eq:app_linear_iterated_new}) into its fundamental statistical components. By expanding $\kappa_{\mathrm{lin}}$ and algebraically factoring out $(1-\rho^{2J})$ in the denominator of the consensus variance term to highlight the spectral gap dependence, the total steady-state error is strictly bounded by three distinct components expressed as
\begin{align}\label{eq:app_steady_state_final}
    \text{Error Floor} &=\!\! \frac{\eta c_{\text{base}}}{c_{\Phi,1}\mu_{\mathrm{eff}}}(\widehat{\Gamma}\!+\!\sigma^{2}) \!+ \!\frac{\eta^2 c_{\text{var}}}{c_{\Phi,1}\mu_{\mathrm{eff}}(1\!-\!\rho^{2J})}(\widehat{\Gamma}\!\!+\!\sigma^{2}) \nonumber \\
    &\quad + \frac{\alpha c_{\mathrm{OCB}}(1-\rho^{2J})}{c_{\Phi,1}\mu_{\mathrm{eff}}}\Omega_{*}^{2}.
\end{align}
By defining the bounding constants as $C_{\text{base}} = \frac{\eta c_{\text{base}}}{c_{\Phi,1}\mu_{\mathrm{eff}}}(\widehat{\Gamma}+\sigma^{2})$, $C_{\Gamma} = \frac{\eta^2 c_{\text{var}}}{c_{\Phi,1}\mu_{\mathrm{eff}}}(\widehat{\Gamma}+\sigma^{2})$, and $C_{\Omega} = \frac{c_{\mathrm{OCB}}}{c_{\Phi,1}\mu_{\mathrm{eff}}}$, we recover the exact theoretical upper bound presented in Theorem~\ref{Theorem: linear}. This structural form mathematically isolates the dual impact of $J$, establishing the foundation for optimizing the communication depth. This completes the proof.
\subsection{Proof of Theorem~\ref{Theorem: Jstar}}\label{Proof: Theorem Jstar}
From Theorem~\ref{Theorem: linear}, the transient term vanishes as $r\to\infty$. Ignoring the $J$-independent noise $C_{\text{base}}$, we minimize the steady-state objective formulated as
\begin{equation}\label{eq:app_boundJ_compact}
    \mathrm{Bound}(J) \triangleq \frac{A}{1-\rho^{2J}} + \alpha B (1-\rho^{2J}),
\end{equation}
where $A \triangleq C_{\Gamma}$ and $B \triangleq C_{\Omega}\Omega_{*}^{2}$. To find the stationary point, we relax $J$ and substitute $y \triangleq 1-\rho^{2J} \in (0,1)$, yielding $f(y) \triangleq \frac{A}{y} + \alpha B y$. Differentiating $f(y)$ yields
\begin{equation}\label{eq:app_fy_derivs}
    f'(y) = \alpha B - \frac{A}{y^2}, \qquad f''(y) = \frac{2A}{y^3}.
\end{equation}
Because $A>0$ and $y>0$, the second derivative $f''(y) > 0$ confirms strict convexity. Setting $f'(y)=0$ identifies the unique stationary point $y^* = \sqrt{A/(\alpha B)}$, which represents a feasible interior solution ($y^* < 1$) if $\alpha B > A$. Mapping $y^*$ back to $J^*$ using $1-\rho^{2J^*} = y^*$ results in $\rho^{2J^*} = 1 - \sqrt{A/(\alpha B)}$. Solving for $J^*$ yields the closed-form expression given by
\begin{equation}\label{eq:app_J_star_final}
    J^* = \frac{1}{-2\ln\rho} \ln\left(\frac{1}{1 - \sqrt{\frac{A}{\alpha B}}}\right).
\end{equation}

Strict convexity and the monotonic relationship between $y$ and $J$ ensure that $J^*$ is the unique global minimizer, mathematically confirming the U-shaped performance trend. This completes the proof.

\bibliographystyle{IEEEtran}
\bibliography{citations}

\begin{IEEEbiography}[{\includegraphics[width=1in,height=1.25in,clip,keepaspectratio]{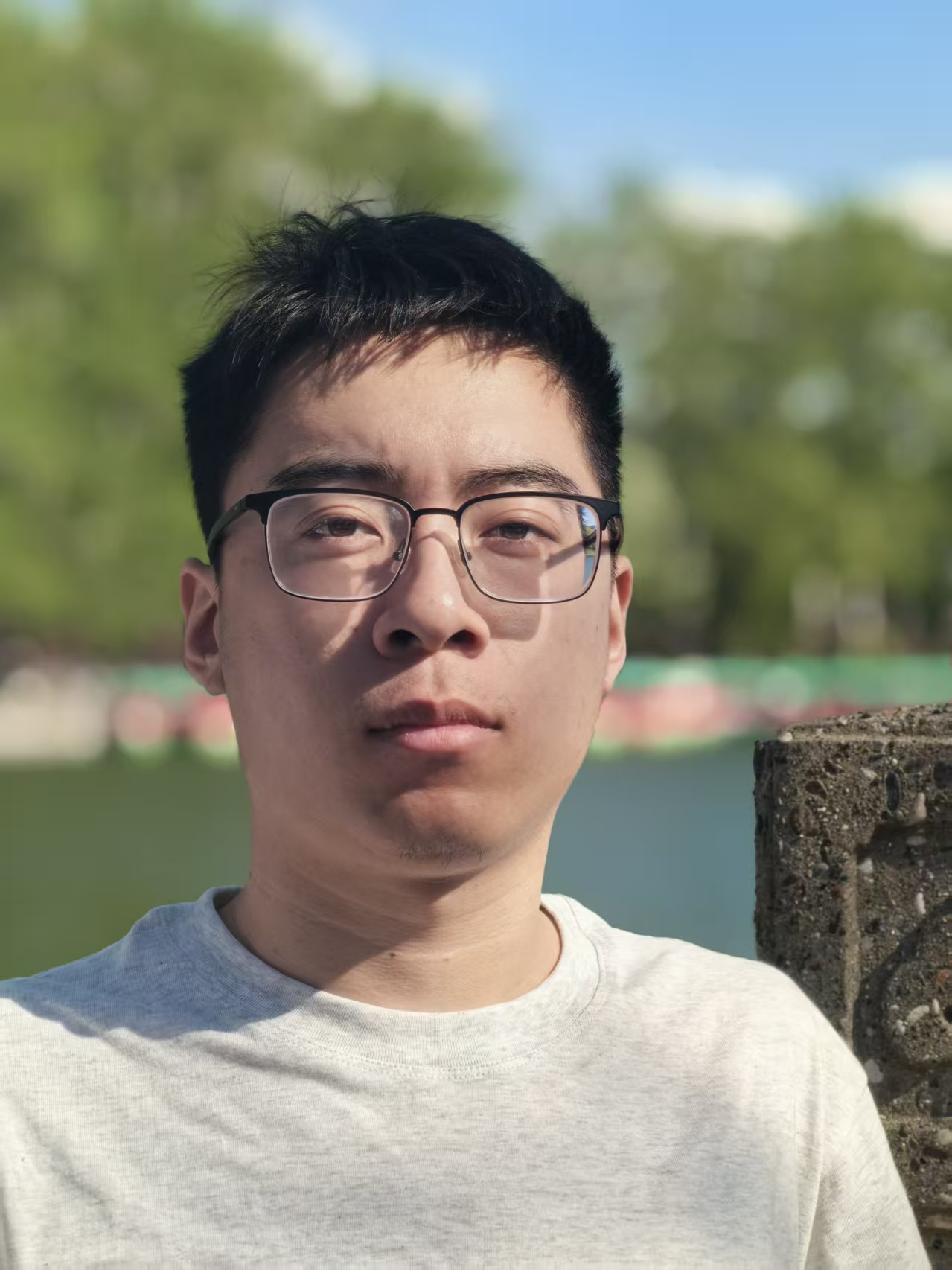}}]{Lin Yin} 
received the B.E. degree in applied physics from Beijing University of Posts and Telecommunications (BUPT), China, in 2023. He is pursuing his Ph.D. with the School of Information and Communication Engineering at BUPT. His research interests include personalized federated learning and semantic communication.
\end{IEEEbiography}\vspace{-20 mm}

\begin{IEEEbiography}[{\includegraphics[width=1in,height=1.25in,clip,keepaspectratio]{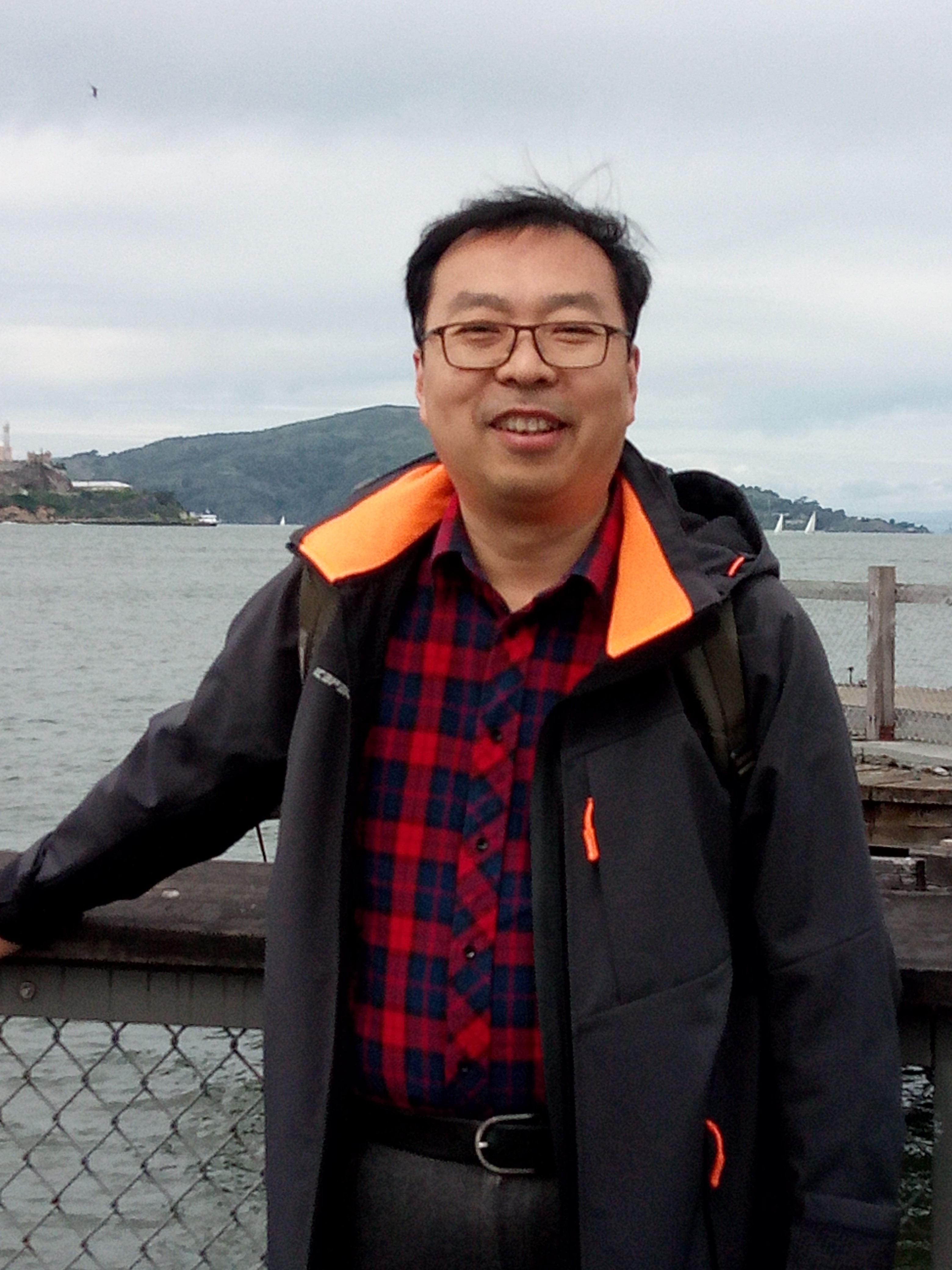}}]{Tiejun Lv} received the M.S. and Ph.D. degrees in electronic engineering from the University of Electronic Science and Technology of China (UESTC), Chengdu, China, in 1997 and 2000, respectively. From January 2001 to January 2003, he was a Post-Doctoral Fellow at Tsinghua University, Beijing, China. In 2005, he was promoted to a Full Professor at the School of Information and Communication Engineering, Beijing University of Posts and Telecommunications (BUPT). From September 2008 to March 2009, he was a Visiting Professor with the Department of Electrical Engineering, Stanford University, Stanford, CA, USA. He is currently the author of four books, one book chapter, and more than 170 published journal articles and 230 conference papers on the physical layer of wireless mobile communications. His current research interests include signal processing, communications theory, and networking. He was a recipient of the Program for New Century Excellent Talents in University Award from the Ministry of Education, China, in 2006. He received the Nature Science Award from the Ministry of Education of China for the hierarchical cooperative communication theory and technologies in 2015 and Shaanxi Higher Education Institutions Outstanding Scientific Research Achievement Award in 2025.
\end{IEEEbiography}\vspace{-10 mm}

\begin{IEEEbiography}[{\includegraphics[width=1in,height=1.25in,clip,keepaspectratio]{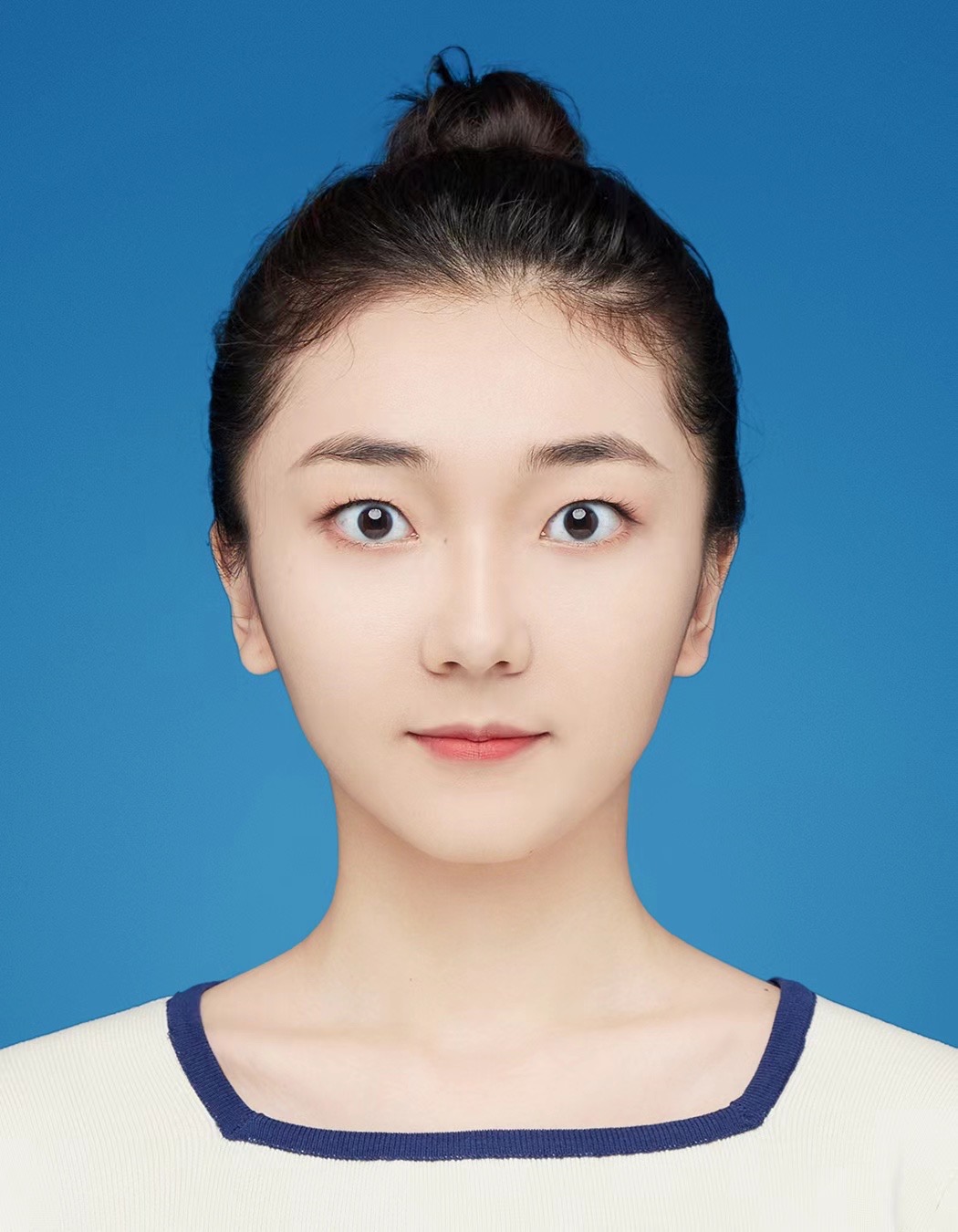}}]{Weicai Li} received the B.E. and Ph.D. degrees from the School of Information and Communication Engineering, Beijing University of Posts and Telecommunications, China, in 2020 and 2025, respectively. From December 2022 to December 2023, she was a Visiting Student at the University of Technology Sydney, Australia. She is currently with the School of Information Communication Engineering, Beijing Information Science and Technology University, Beijing, China. Her research interests include integrated sensing and communications, wireless federated learning, distributed computing, and privacy preservation.
\end{IEEEbiography}\vspace{-10 mm}

\begin{IEEEbiography}[{\includegraphics[width=1in,height=1.25in,clip,keepaspectratio]{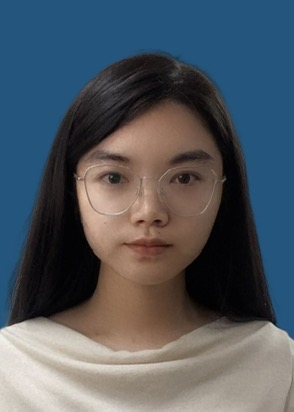}}]{Xi Yu} (Graduate Student Member, IEEE)
received the B.E. degree in communication engineering from Beijing University of Posts and Telecommunications (BUPT), China, in 2020. She is pursuing her Ph.D. with the School of Information and Communication Engineering at BUPT. Her research interests include multi-task semantic communication and privacy-preserving techniques.
\end{IEEEbiography}\vspace{-20 mm}

\begin{IEEEbiography}[{\includegraphics[width=1in,height=1.25in,clip,keepaspectratio]{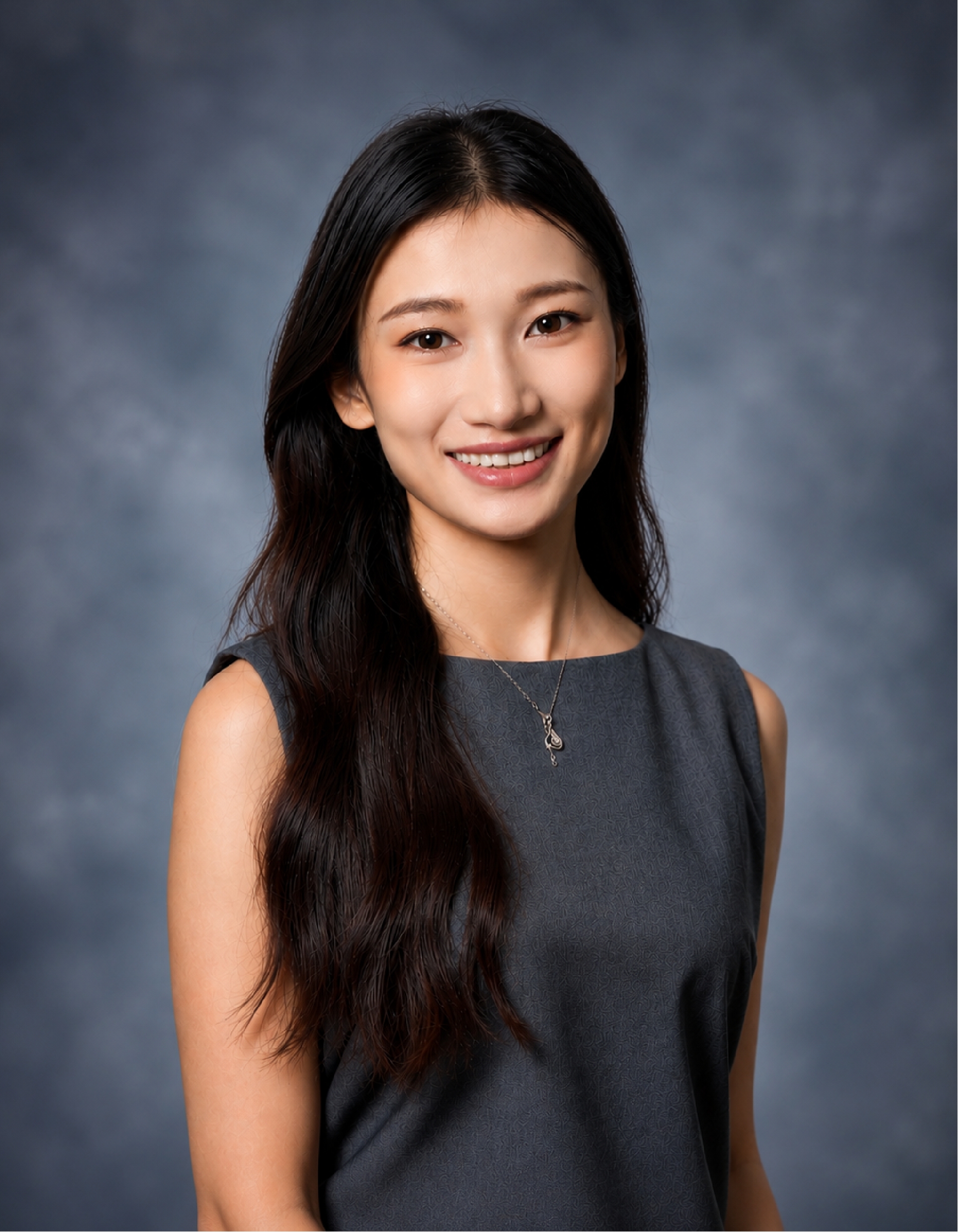}}]{Xiaoyu He} 
is currently pursuing the Ph.D. degree in Information and Communication Engineering at Beijing University of Posts and Telecommunications (BUPT), Beijing, China. From December 2025 to June 2026, she was a visiting Ph.D. student at the Singapore University of Technology and Design (SUTD), Singapore, under the supervision of Prof. Tony Q. S. Quek. Since August 2026, she has been a joint-training visiting Ph.D. student at the University of Bologna, Bologna, Italy, under the supervision of Prof. Paolo Bellavista. Her research interests include federated learning, machine unlearning, decentralized federated learning (D-FL), personalized learning, adaptive model pruning, reinforcement learning, non-cooperative game theory, multi-hop routing optimization, and resource-constrained wireless networks.
\end{IEEEbiography}\vspace{-20 mm}

\end{document}